\documentclass[oneside,11pt]{article}
\usepackage{caption}
\usepackage{blindtext}

\usepackage[abbrvbib, preprint]{jcustom}

\usepackage{lastpage}

\usepackage[utf8]{inputenc} 
\usepackage[T1]{fontenc}           
\usepackage{booktabs} 
\usepackage{hyperref}

\hypersetup{
    colorlinks=true,
    allcolors=blue
}

\usepackage{amsfonts,amsmath,amsthm,amssymb}       
\usepackage{nicefrac}       
\usepackage{microtype}      
\usepackage{xcolor}         
\usepackage{pifont}
\usepackage{tikz}
\usepackage{subcaption}

\usetikzlibrary{arrows.meta}
\usepackage{wrapfig}
\usepackage[presets={vec-cev}]{letterswitharrows}
\usepackage{cleveref}
\AddToHook{cmd/appendix/before}{
    \crefalias{section}{appendix}%
    \crefalias{subsection}{appendix}
}
\usepackage{graphicx}
\usepackage{amsfonts}
\usepackage{mathtools}
\usepackage{booktabs}
\usepackage{nicefrac}

\newtheorem{theorem}{Theorem}
\theoremstyle{remark}

\newcommand{\R}{\mathbb{R}}

\definecolor{src}{RGB}{76,114,176}
\definecolor{tgt}{RGB}{221,132,82}

\tikzset{
  traj/.style={black!75, line width=0.9pt, -{Latex[length=1.8mm]}},
  noisytraj/.style={black!70, line width=0.8pt, -{Latex[length=1.6mm]}, opacity=0.75},
  margsrc/.style={fill=src!12, draw=src!55, line width=0.5pt},
  margtgt/.style={fill=tgt!12, draw=tgt!55, line width=0.5pt},
  sp/.style={fill=src, draw=src},
  tp/.style={fill=tgt, draw=tgt},
}

\title{One-Step Generative Surrogate Models via Block-Triangular Joint Drifting}

\author{%
  Nicholas Geissler\\
   \addr Courant Institute of Mathematical Sciences, New York University\\
   \AND
 \name Shreya Jha \\
  \addr Courant Institute of Mathematical Sciences, New York University\\
  \AND
 \name Ricardo Baptista \\
  \addr Statistical Sciences, University of Toronto\\
   \AND
  Benjamin Peherstorfer\\
   \addr Institute of Mathematics, EPFL
}

\begin{document}

\maketitle

\begin{abstract}
Drifting provides a direct route to one-step generative models, but applying it directly to stochastic transition modeling requires multiple samples of the next state conditioned on the same current state. Standard trajectory data, however, typically provide only one realized next state for each observed current state and therefore do not provide an empirical approximation of the corresponding conditional distribution over possible next states. We introduce block-triangular joint drifting, which instead applies a projected drift field to the empirically accessible joint distribution of consecutive states. Importantly, the block-triangular architecture preserves the current-state marginal while making its second component a direct sampler of the conditional distribution of possible next states. The resulting surrogate generates stochastic trajectories with one model evaluation per time step, without auxiliary generative steps between time steps. Numerical experiments demonstrate accurate marginal and trajectory-dependent statistics and favorable accuracy-cost tradeoffs compared with deterministic, diffusion-, flow-, and distillation-based generative surrogate models.
\end{abstract}

\begin{keywords}
  model reduction, stochastic modeling, generative modeling, dynamical systems
\end{keywords}

\begin{AMS}
  65N22, 65N30, 65F55, 65D40
\end{AMS}

\section{Introduction}

Learning low-cost reduced models of stochastic systems is a ubiquitous task for enabling uncertainty quantification, ensemble forecasting, design, and other outer-loop applications. 
While for deterministic systems it is sufficient to learn a function that maps the current state $x(t)$ to the state $x(t + 1)$ at the next time step, see, e.g., \cite{PEHERSTORFER2016196,Fresca2021,Lu2021,JMLR:v24:21-1524,Regazzoni2024,BATLLE2024112549}, for stochastic systems, one is typically interested in learning a model to rapidly sample from the transition law represented as a conditional distribution $p_t(\cdot \,|\, x(t))$ given a realization $x(t)$ of the current state $X(t)$. Once such a model is available, stochastic trajectories can be generated autoregressively. 

Diffusion- and flow-based generative modeling provides flexible mechanisms for modeling complex conditional distributions such as transition laws  \cite{sohldickstein2015deep,ho2020denoising,song2021score,pmlr-v180-shi22a,lipman2023flow,tong2024conditional,albergo2025stochastic}. These approaches have been extended to probabilistic forecasting and simulation of dynamical systems \cite{chen2024probforecast,cachay2023dyffusion,KOHL2026108641}, where a generative model conditioned on the current state can represent a distribution over possible future states. Their main drawback is computational costs of generating trajectories. Generating a new sample from the transition law typically requires multiple denoising or flow-integration steps. Thus, a rollout of $T$ time steps with $K$ generative steps per time step requires $KT$ model evaluations, which becomes costly when generating large trajectory ensembles.

This has motivated work on one- and few-step generative models, often building on diffusion- and flow-based modeling. For example, progressive distillation compresses multi-step diffusion samplers into students that use increasingly fewer steps \cite{salimans2022progressive}; consistency models enable one- or few-step sampling through either distillation or direct training \cite{song2023consistency}; and ReFlow iteratively straightens generative trajectories to reduce the number of generation steps needed per time step \cite{liu2023rectified}. Other approaches such as inductive moment matching \cite{zhou2025imm}, MeanFlow \cite{geng2025mean}, flow-map self-distillation \cite{boffi2025how}, and stochastic lifting \cite{berman2026stochastic} instead target one- or few-step generation directly, without requiring a pretrained teacher, even though MeanFlow can be applied as a distillation method as well. These methods reduce generation costs but when used as teacher-based distillation schemes for stochastic dynamics, they require first learning a multi-step conditional generative model and subsequently compressing it. Additionally, inherent to the distillation is that accuracy is traded for runtime speedups. 

A complementary line of work learns deterministic reduced dynamics for stochastic systems, enabling fast rollouts without an auxiliary generative sampling process. Action Matching \cite{neklyudov2023action,berman2024parametric} and DICE \cite{blickhan-berman-stuart-etal:2025} learn population dynamics consistent with the evolving time marginals, while, e.g.,  \cite{schwerdtner2026twoparameter,berman2026ngif} extend this viewpoint to more general, non-gradient transport fields. Because time marginals do not determine the stochastic transition law, such population models cannot in general recover trajectory-dependent statistics. Yet another complementary line of work learns reduced models of stochastic or uncertain dynamics by identifying low-dimensional stochastic differential equations with learned drift and diffusion operators \cite{freitag2025learning} or by combining dimensionality reduction with probabilistic latent dynamics to construct generative reduced-order models \cite{conti2026veni}.

In this work we instead build on drifting, a generative modeling concept for learning one-step generative models directly \cite{deng2026drifting,han2026wflow,turan2026generativedriftingsecretlyscore,lai2026unifiedviewdriftingscorebased}. Drifting starts from the distribution induced by the current model and iteratively transports its samples toward samples from a target distribution using a distribution-dependent drift field. The transport is performed during training. After convergence, the model itself maps reference noise to a target sample in one model evaluation. In particular, drifting does not require numerical integration of an auxiliary generative dynamics when generating samples. 

However, applying drifting directly to a transition law $p_t(\cdot\mid x(t))$ raises a difficulty that does not arise in the same way for conditional diffusion or flow-matching objectives.
Such methods can be trained from paired samples $(X(t),X(t+1))$, each pair providing a sample for a regression objective for the conditional score or velocity field, and repeated realizations of $X(t+1)$ at exactly the same conditioning state are not required.
Drifting is different because its update is defined through a distribution-dependent field and therefore requires an empirical sample representation of the target distribution. If one sets $p_t(\cdot\mid X^{(i)}(t))$ as target, trajectory data provide only the single successor $X^{(i)}(t+1)$, so the corresponding empirical target is the point mass $\delta_{X^{(i)}(t+1)}$. It contains no direct information about the spread or shape of the transition law.

Our key step is to change the distribution to which drifting is applied. Rather than drifting toward the transition law directly, which is a conditional distribution, we lift the problem to the joint law $p_t(X(t),X(t+1))$, for which the observed transition pairs provide samples directly. Learning joint distributions in order to obtain conditional generators has a long history in measure transport. In particular, triangular transport maps expose conditional distributions through their structure and have been developed for sample-based Bayesian inference, density estimation, and nonlinear state-space models \cite{Marzouk2017,JMLR:v19:17-747,doi:10.1137/20M1312204,Baptista2024}. Closest to our construction, block-triangular transport methods learn conditional samplers from samples of a joint distribution, thereby sharing information across conditioning values without requiring repeated samples at each value \cite{baptista2024conditional}. Joint generative models can also be conditioned at inference, including in diffusion-based scientific forecasting and simulation-based inference \cite{shysheya2024conditional,gloeckler2024allinone}. Related conditional optimal-transport and flow-matching approaches likewise exploit joint or block-triangular constructions for conditional generation \cite{kerrigan2024dynamic,zhai2026conditional}. Joint models of consecutive states have also been proposed for generative forecasting, with conditional predictions extracted from collections of generated joint samples \cite{wyrod2026generative}.

Our use of the joint law differs in its role for drifting and in the resulting inference procedure. We constrain the joint generator to the block-triangular form and project the joint drifting field onto the second block. The joint law thus supplies the empirical target needed by drifting during training, while the triangular block structure makes the learned second component a direct sampler of the transition law. After training, no conditioning procedure, reverse diffusion, flow integration, or search over joint samples is required; one neural-network evaluation produces a sample of the transition law.

We demonstrate on numerical experiments with stochastically forced Burgers' and Navier-Stokes equations that the proposed approach generates diverse and accurate trajectories with one (neural-network) model evaluation per time step. In particular, we show that our approach matches or improves upon more expensive conditional flow and autoregressive diffusion baselines, as well as upon one- and few-step MeanFlow and ReFlow-based distilled models.

\section{Preliminaries and problem formulation}
We recapitulate preliminaries and state the problem of applying drifting to transition laws.

\subsection{Setup}
Consider an $n$-dimensional stochastic process $X(t)$ that is defined over $\mathbb{R}^n$ and  discrete time $t \in \{0, \dots, T\} \subset \mathbb{N}$. 
We denote the corresponding transition law as $X(t + 1) \sim p_t(\cdot \,|\, X(t))$ with initial $X(0) \sim \mu_0$. Note that the transition law can depend on time $t$, i.e., we are not restricting the following to time-homogeneous transition laws.
The joint law of $X(t)$ and $X(t + 1)$ is denoted as  
\[
p_t(X(t), X(t + 1)) = \mu_t(X(t))p_t(X(t + 1) | X(t)),
\]
with the time marginal $\mu_t$ recursively defined as $\mu_{t+1}(y) = \int p_t(y | x) \mu_{t}(x)\mathrm dx$.
In the following, we have access to training data obtained from the process $X(t)$,
\begin{equation}\label{eq:data}
\mathcal{D} = \{X^{(i)}(t) \,|\, i = 1, \dots, N, t = 0, \dots, T\} \subset \mathbb{R}^n\,, 
\end{equation}
which consists of $i = 1, \dots, N$ trajectories $X^{(i)}(0), \dots, X^{(i)}(T)$ over $t = 0, \dots, T$ time steps.

\subsection{One-step generative surrogate modeling}
We seek a
    map  $g: \R^n \times \R^d \times \mathbb{N} \to \R^n$ such that for $\mu_t$-a.e. $x_t$, we have
    \begin{equation}\label{eq:conditionalmap}
        g(x(t),\cdot,t)_{\sharp}\pi
        =
        p_t(\cdot\,|\,x(t))\,,
    \end{equation}
    where $\pi$ is a suitable reference distribution such as a standard normal. 
The condition \eqref{eq:conditionalmap} implies that if $z\sim\pi$ is a sample from the reference $\pi$, then evaluating the map $g$ at $z$,
    \[
        g(x(t),z,t) \sim p_t(\cdot\,|\,x(t)),
    \]
    gives a sample of the transition law $p_t(\cdot \,|\, x(t))$. 
Correspondingly, we refer to $g$ as a one-step map because a single evaluation $g(x(t),z,t)$ produces a sample from the transition law at the next time point. In particular, no numerical integration of auxiliary dynamics or a sequence of intermediate generative steps are required.

\subsection{Drifting schemes for time marginals}
One approach 
    for learning one-step generators to sample from a target distribution $\eta$ is given by drifting schemes, first introduced in \cite{deng2026drifting}, and further explored in, e.g.,  \cite{han2026wflow,turan2026generativedriftingsecretlyscore,lai2026unifiedviewdriftingscorebased}.
For a parametrized function $f_{\theta}: \R^d  \to \R^n$, where $\theta$ denotes the vector of, e.g., neural-network weights, define the model distribution induced by the pushforward $q_{\theta}=(f_{\theta})_{\sharp}\pi$. 
Drifting schemes iteratively update the parameters $\theta_j$ of $f_{\theta_j}$ over iterations $j = 0, 1, 2, \dots$ such that the model distribution $q_{\theta_j} = (f_{\theta_j})_{\sharp}\pi$ improves the match to the target distribution $\eta$.
The iterative updating is achieved via a drift field $V_{\eta,q_{\theta_j}}: \mathbb{R}^n \to \mathbb{R}^n$, which determines how samples $x \sim q_{\theta_j}$ should be transported, 
    \[
    T_{h,q}(x) = 
    x + h V_{\eta, q}(x)\,,
    \]
    where $h > 0$ is a step size. 
The drifting fields must satisfy $V_{\eta, \eta} = 0$ so that the target distribution $\eta$ is a fixed  point $T_{h,\eta}(x) = x$ of $q \mapsto T_{h,q}$. 
On the parameter level, if at iteration $j$ we have $x = f_{\theta_j}(z)$ with a sample $z \sim \pi$, then the transported sample is 
    \[
    \tilde{x} = T_{h,q_{\theta_j}}(f_{\theta_j}(z)) = f_{\theta_j}(z)  + hV_{\eta, q_{\theta_j}}(f_{\theta_j}(z)).
    \]
Because at iteration $j$ we have $f_{\theta_j}(z) \sim q_{\theta_j}$, the distribution of the transported samples is $\tilde{q}_{j + 1} = (T_{h,q_{\theta_j}})_{\sharp}q_{\theta_j}$.
This defines an iteration: draw samples from the current $q_{\theta_j}$ with $f_{\theta_j}$, transport the samples with the drift field, and then fit $\theta_{j + 1}$ so that $f_{\theta_{j + 1}}$ generates samples close to the transported samples, which is achieved with the loss
    \begin{equation}\label{eq:vanilladriftloss}
        \mathcal{L}(\theta; \theta_j) = \mathbb{E}_{z \sim \pi} \left[\|f_{\theta}(z) - \left(f_{\theta_j}(z)+hV_{\eta,q_{\theta_j}}(f_{\theta_j}(z))\right)\|_2^2 \right].
    \end{equation}
In this manner, each gradient step approximates one fixed-point iteration, with $f_{\theta_j}$ parameterizing the current iterate.

\subsection{Problem formulation: Conditional drifting from trajectory data}
The drifting field $V_{\eta,q_{\theta_j}}$ depends on the target distribution
$\eta$; however, in the data-driven setting, which we consider here, 
the drifting field is constructed from a collection of samples of the target $\eta$ together with samples from the current model
distribution $q_{\theta_j}$. 
In particular, the samples from the target distribution provide the empirical approximation of
$\eta$ used by the drifting procedure.
If we now consider the direct application
of drifting to learn how to sample from the transitional law, this would mean to condition on a state $x(t)$ and take $p_t(\cdot\,|\,x(t))$ to be the target distribution $\eta$.  
The difficulty is that the trajectory data in \eqref{eq:data} do not provide multiple samples from $p_t(\cdot \,|\, x(t))$. Instead, for each observed conditioning state $ x(t) = X^{(i)}(t)$, the training data contain only its single realized successor $X^{(i)}(t+1)$.
Consequently, conditioning the empirical training data on an observed
state $X^{(i)}(t)$ does not provide the drifting procedure with samples that
characterize the corresponding  conditional
$p_t(\cdot\,|\,X^{(i)}(t))$. In particular, just having access to one realization of $X^{(i)}(t + 1)$ for a given $X^{(i)}(t)$
provides no information about the spread, shape, or possible multimodality
of the conditional distribution.
We therefore seek a drifting scheme that can learn the transition law from trajectory data without requiring multiple  samples from the conditional distribution for a given conditioning state.

\section{Block-Triangular Joint Drifting}
We propose to apply drifting to the joint transition law, for which transition-pair samples are available, while restricting the generator to a block-triangular form from which the desired conditional one-step map can be extracted.

\subsection{Targeting the joint law}
For a fixed time $t$, the training data \eqref{eq:data} contains the transition pairs  $\{(X^{(i)}(t),X^{(i)}(t+1))\}_{i=1}^{N}$, which are samples from the joint law $p_t(\cdot,\cdot)$.
Thus, the training data contains multiple samples from the joint law $p_t(\cdot,\cdot)$ for a fixed $t$ as long as $N > 1$.
Rather than separating the transition pairs at time $t$ via conditioning, we use the joint law $p_t$ as the target distribution of the drifting procedure. In this way, all available
    transition pairs at time $t$ contribute samples to the same drifting target.
We introduce the corresponding parametrized map $
        f_\theta:
        \R^n\times\R^d\times\mathbb{N}
        \to
        \R^{2n}
    $
    and model joint distribution
\begin{equation}\label{eq:TargetJointatTheta}
        q_\theta(t)
        =
        f_\theta(\cdot,\cdot,t)_{\sharp}
        (\mu_t\times\pi).
    \end{equation}
    Note that the model joint distribution depends on time $t$ because $f$ is dependent on time.

\subsection{A block-triangular parametrization}
Applying drifting directly with $f_{\theta}$ while using as target the joint law $p_t$ means that  $f_{\theta}$ produces samples $(X(t),X(t+1))$ jointly, but does not in
    general provide a way to fix an arbitrary current state $x(t)$ and sample
    from the transition law $p_t(\cdot\,|\,x(t))$.
To efficiently sample from the transition law, we consider a specific parametrization of $f_{\theta}$, so that when $f_{\theta}$ has learned to sample from the joint law, we additionally obtain a one-step map for the conditional law.
We therefore follow the block-triangular construction of
\cite{baptista2024conditional} and parametrize $f_{\theta}$ as
    \begin{equation}\label{eq:btidform}
        f_\theta(x(t),z,t)
        =
        \left(
        x(t),
        g_\theta(x(t),z,t)
        \right),
    \end{equation}
    where the conditioning variable $x(t)\sim\mu_t$ is a sample from the time marginal $\mu_t$ and $z\sim\pi$ is a reference sample, which is sampled independently from the conditioning variable.
The parametrization \eqref{eq:btidform}  has a  block triangular form because its first component is independent of $z$. In particular, the first marginal induced by it is $\mu_t$ for every value of $\theta$, while only the second component
    $g_\theta$ depends on $\theta$.
The block-triangular structure ensures that matching the joint law recovers the desired conditional law. In particular,
    if
    \[
        f_\theta(\cdot,\cdot,t)_{\sharp}(\mu_t\times\pi)
        =
        p_t,
    \]
    then
    \[
        g_\theta(x(t),\cdot,t)_{\sharp}\pi
        =
        p_t(\cdot\,|\,x(t))
    \]
    for $\mu_t$-almost every $x(t)$; see Theorem~2.4 of
    \cite{baptista2024conditional}.
Thus, by parametrizing $f_{\theta}$ as in  \eqref{eq:btidform}, it is
    sufficient to train the block-triangular map $f_\theta$ to match the joint
    law $p_t$, and one obtains a one-step map for the transition law via $g_{\theta}$.

\subsection{Drifting field}
\label{subsec:ProjectedWFlow}
We now derive a drifting field that is compatible with the joint law as target and with the block-triangular parametrization as model $f_{\theta}$. 
We begin with the drifting field introduced in \cite{han2026wflow} and applying it to the joint law $p_t$ over the state space $\R^{2n}$, 
\begin{equation}\label{eq:WFlowVelocity}
        V_{p_t,q_{\theta}(t)}^{\varepsilon}(y)
        =
        -
        \nabla_y
        \frac{\delta
        \mathcal{S}_{\varepsilon}
        \big(q_{\theta}(t),p_t\big)}
        {\delta q_{\theta}(t)}(y),
        \qquad
        y\in\R^{2n}\,,
    \end{equation}
where $\mathcal{S}_{\varepsilon}$ denotes the Sinkhorn divergence with entropic regularization parameter $\varepsilon>0$; see, e.g., \cite{pmlr-v89-feydy19a}.  The first variation with respect to the model distribution is denoted as $\delta/\delta q_{\theta}(t)$. 

The field in \eqref{eq:WFlowVelocity} acts on both components of the
    joint state. Our block-triangular parametrization \eqref{eq:btidform},
    however, fixes the first component     and therefore cannot realize motion in the dimensions corresponding to the first block of the state space of the joint law.
 With $y=(y_1,y_2)\in\R^n\times\R^n$, we decompose the joint field \eqref{eq:WFlowVelocity} as
    \[
        V_{p_t,q_{\theta}(t)}^{\varepsilon}(y)
        =
        \left(
            V_1^{\varepsilon}(y),
            V_2^{\varepsilon}(y)
        \right),
    \]
    where $V_1^{\varepsilon},V_2^{\varepsilon}:\R^{2n}\to\R^n$ denote joint field's
    first and second components, respectively.
 We then restrict the joint field to the directions compatible with the block-triangular parametrization. Let $P$ denote the orthogonal projection onto the second block. We define
\begin{equation}\label{eq:ProjectedDrift}
        P V_{p_t,q_{\theta}(t)}^{\varepsilon}(y)
        =
        \left(
            0,
            V_2^{\varepsilon}(y)
        \right),
    \end{equation}
    to obtain the projected field, which is the drifting field that we use to update the conditional component $g_{\theta}$.

\subsection{Loss formulation}
\label{subsec:LossFormulation}

For a fixed time $t$, we insert the projected field
    \eqref{eq:ProjectedDrift} into the drifting loss
    \eqref{eq:vanilladriftloss}, which yields
    \begin{align}
        &\mathcal{L}_t(\theta;\theta_j)
        = 
        \\ &\mathbb{E}_{x(t)\sim\mu_t,z\sim\pi}
        \Big[
        \big\|
            f_{\theta}(x(t),z,t)
            -
            \big(
                f_{\theta_j}(x(t),z,t)
                +
                hP V_{p_t,q_{\theta_j}(t)}^{\varepsilon}
                \big(
                    f_{\theta_j}(x(t),z,t)
                \big)
            \big)
        \big\|_2^2
        \Big]. \nonumber
        \label{eq:ProjectedJointLoss}
    \end{align}
Using the block-triangular parametrization
    \eqref{eq:btidform} and the projected field
    \eqref{eq:ProjectedDrift}, the transported target appearing in
    \eqref{eq:ProjectedJointLoss} can be written as
    \begin{multline}
        f_{\theta_j}(x(t),z,t)
        +
        hP V_{p_t,q_{\theta_j}(t)}^{\varepsilon}
        \big(
            f_{\theta_j}(x(t),z,t)
        \big)
        =\\
        \left(
            x(t),\,
            g_{\theta_j}(x(t),z,t)
            +
            hV_2^{\varepsilon}
            \big(
                f_{\theta_j}(x(t),z,t)
            \big)
        \right).
    \end{multline}
Since $f_{\theta}(x(t),z,t)$ has first component $x(t)$ by
    \eqref{eq:btidform}, the first block of the residual in
    \eqref{eq:ProjectedJointLoss} is identically zero. Therefore,
    \eqref{eq:ProjectedJointLoss} reduces to
    \begin{align}
        \mathcal{L}_t(\theta;\theta_j)
        =
        \mathbb{E}_{x(t)\sim\mu_t,z\sim\pi}
        \left[
        \left\|
            g_{\theta}(x(t),z,t)
            -
            \left(
                g_{\theta_j}(x(t),z,t)
                +
                hV_2^{\varepsilon}
                \big(
                    f_{\theta_j}(x(t),z,t)
                \big)
            \right)
        \right\|_2^2
        \right].
        \label{eq:ConditionalDriftLoss}
    \end{align}
Finally, we average the loss over the available time points,
\begin{equation}\label{eq:ProjectedLoss}
        \mathcal{L}(\theta;\theta_j)
        =
        \mathbb{E}_{t\sim\mathcal{U}(\{0,\ldots,T-1\})}
        \left[
\mathcal{L}_t(\theta;\theta_j)
        \right].
    \end{equation}
For each $t$, the drifting field is constructed from the
    corresponding target $p_t$ and current model distribution
    $q_{\theta_j}(t)$, while the parameters $\theta_j$ are shared across time.

\subsection{Theoretical properties}
\label{sec:CDrift:Theory}
Because the joint field \eqref{eq:WFlowVelocity} is the Sinkhorn
    drifting field considered in \cite{han2026wflow}, we can directly build on
    the well-posedness and other theoretical results established therein.
    In particular, the projected field \eqref{eq:ProjectedDrift} inherits the
    regularity conditions required in 
    \cite{han2026wflow}. Indeed, since $P$ is an orthogonal projection,
    \[
        \|PV_{p_t,q}^{\varepsilon}(y) -PV_{p_t,q}^{\varepsilon}(y')\|_2 \leq \|V_{p_t,q}^{\varepsilon}(y)-V_{p_t,q}^{\varepsilon}(y')\|_2
        \qquad
        \text{for all }y,y'\in\R^{2n},
    \]
    so the growth and Lipschitz bounds satisfied by the joint field are
    preserved under projection. Thus, the well-posedness result of
    \cite{han2026wflow} applies also to the projected field.

We additionally note that $q=p_t$ being a (possibly non-unique) fixed point of $PV_{p_t,q}^{\varepsilon}$ follows from the work in \cite{han2026wflow,pmlr-v89-feydy19a} under suitable assumptions. In particular, it is established in \cite{pmlr-v89-feydy19a} that for distributions with compact support, $q=p_t \implies \mathcal{S}_{\varepsilon}(q,p_t) = 0$, and by regularity and non-negativity of the Sinkhorn divergence, $\mathcal{S}_{\varepsilon}(q,p_t) = 0 \implies V^{\varepsilon}_{p_t,q_{\theta}(t)} \equiv 0$. 
Because $P$ is a projection onto the second component of $V^{\varepsilon}_{p_t,q_{\theta}(t)}$,  $p_t$ remains a fixed-point of the projected velocity field.

For drifting, we would like the reverse direction to hold as well, so that we characterize the drift field as
    \[
       PV_{p_t,q}^{0}=0
    \qquad q\text{-a.e.}
    \qquad\Longleftrightarrow\qquad
    q=p_t
    \]
    within the class of joint distributions whose first marginal is $\mu_t$. This ensures that the projection does not introduce additional zeros of the drift field corresponding to joint distributions different from the target distribution.
This implication is not immediate, because projecting a vector field can eliminate nonzero components. Hence it is possible in principle that the projected field vanishes even though the original field does not. We show that this cannot occur for \eqref{eq:ProjectedDrift} when the model distribution $q$ and the target distribution $p_t$ have the same first marginal $\mu_t$ and the distributions are supported compactly and suitably regular. Importantly, the following statement is restricted to the unregularized drift field \eqref{eq:WFlowVelocity}, i.e., $\varepsilon=0$, and thus the following theorem should be viewed as a statement for an idealized setting. The proof proceeds by representing the unregularized drift field through the quadratic-cost optimal transport map from $q$ to $p_t$. We then show that the condition $PV_{p_t,q}^{0}=0$ forces both components of this transport map to coincide with those of the identity map. Consequently, the optimal transport map is the identity $q$-almost everywhere, which implies $q=p_t$.

\begin{theorem}
\label{thm:ProjectedEquilibriumW2}
Set $\varepsilon = 0$ in the drift field \eqref{eq:WFlowVelocity} and fix a time $t$. Let $p_t,q\in\mathcal P_2(\R^{2n})$ have the same first marginal $\mu_t$, where $\mathcal{P}_2(\R^{2n})$ denotes the space of probability measures over $\R^{2n}$ with finite second moment. 
Assume that $q$ and $p_t$ are supported on the closure of $\Omega_1\times\Omega_2$, where $\Omega_1, \Omega_2 \subset \R^{n}$ are bounded, open, convex sets. Assume further that $q$ and $p_t$ are absolutely continuous with respect to Lebesgue measure and that there exist constants $0<\underline{C}\le \overline{C}<\infty$ such that their densities satisfy Lebesgue-almost everywhere
\begin{equation}\label{eq:BoundedDensities}
    \underline{C}\le q\le \overline{C} \quad\text{ on }\Omega_1\times\Omega_2, \qquad \underline{C}\le p_t\le \overline{C} \quad\text{ on }\Omega_1\times\Omega_2.
\end{equation}
Then
\begin{equation}
\label{eq:ProjectedEquilibriumW2}
    PV_{p_t,q}^{0}=0
    \qquad q\text{-a.e.}
    \qquad\Longleftrightarrow\qquad
    q=p_t.
\end{equation}
\end{theorem}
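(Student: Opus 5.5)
The forward direction $q=p_t\Rightarrow PV^0_{p_t,q}=0$ is immediate. If $q=p_t$, the optimal transport map from $q$ to $p_t$ is the identity, so the first variation of $\tfrac12 W_2^2(\cdot,p_t)$ at $q$ is $\tfrac12|y|^2-\varphi(y)$ with $\nabla\varphi=\mathrm{id}$. Hence the full field $V^0$ vanishes, and so does its projection. The substance of the proof is the reverse direction.

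For the reverse direction, we first represent $V^0$ through optimal transport. With $\varepsilon=0$ the Sinkhorn divergence reduces to $\tfrac12 W_2^2$ (up to the normalization convention). Its first variation in $q$ is the Kantorovich potential $\phi$. Since $q$ is absolutely continuous, Brenier's theorem gives the optimal map $T=\nabla u$ from $q$ to $p_t$, with $u$ convex and $\nabla\phi(y)=y-T(y)$ $q$-a.e. Therefore
\[
V^0_{p_t,q}(y)=T(y)-y,
\]
and the hypothesis $PV^0=0$ means $T_2(y_1,y_2)=y_2$ $q$-a.e. The remaining task is to show that $T_1(y)=y_1$ as well.

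To control $T_1$, we use regularity. The supports are convex and bounded, and both densities are bounded above and below. Caffarelli's regularity theory therefore gives $u\in C^{1,\alpha}(\overline{\Omega_1\times\Omega_2})$, strictly convex, so $T$ is a continuous bijection between the interiors. The constraint $\partial_{y_2}u(y_1,y_2)=y_2$ on the open set $\Omega_1\times\Omega_2$ integrates to
\[
u(y_1,y_2)=\tfrac12|y_2|^2+a(y_1),
\]
with $a$ convex. Hence $T(y)=(\nabla a(y_1),y_2)$, so $T_1$ depends only on $y_1$.

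Now we use the shared first marginal. Push $q$ forward by $T$ and take the first marginal: $(\nabla a)_\sharp\mu_t$ must equal the first marginal of $p_t$, which is again $\mu_t$. Thus $\nabla a$ is the gradient of a convex function pushing $\mu_t$ to itself. Here $\mu_t$ is absolutely continuous, being the marginal of $q$ with a density bounded below on $\Omega_1$. By uniqueness in Brenier's theorem, the identity is the only such map $\mu_t$-a.e. Hence $T=\mathrm{id}$ $q$-a.e., and $p_t=T_\sharp q=q$.

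The main obstacle is the integration step. Passing from ``$\partial_{y_2}u=y_2$ $q$-a.e.'' to ``$u-\tfrac12|y_2|^2$ is independent of $y_2$'' requires $u$ to be differentiable and the identity to hold pointwise on a connected open set. This is exactly what Caffarelli's $C^{1,\alpha}$ regularity provides under the density bounds \eqref{eq:BoundedDensities} and convexity of the domain. If that regularity were unavailable, we would instead argue with the monotonicity of $\partial u$ on a.e. line in the $y_2$ direction, using that $u$ is convex and locally Lipschitz. A secondary point is to fix the normalization of $\mathcal S_0$ against $W_2^2$, which affects only a positive constant factor and not the zero set.
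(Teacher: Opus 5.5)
Your proposal is correct and follows essentially the same route as the paper: you identify $V^0_{p_t,q}=T_q^{p_t}-\mathrm{Id}$ via Brenier and the Kantorovich potential, use Caffarelli regularity together with the density lower bound to upgrade $T_2=y_2$ from $q$-a.e.\ to everywhere on $\Omega_1\times\Omega_2$, integrate along segments in the convex set $\Omega_2$ to obtain $u(y_1,y_2)=a(y_1)+\tfrac12\|y_2\|_2^2$, and then use the shared first marginal to force $\nabla a=\mathrm{id}$ $\mu_t$-a.e. The only difference is cosmetic and lies in that last step: you invoke Brenier--McCann uniqueness, using absolute continuity of $\mu_t$, whereas the paper argues that $\nabla a$, being the gradient of a convex function, is optimal and must therefore attain the zero transport cost of the identity.
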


\begin{proof}
Because $q$ is absolutely continuous with respect to the Lebesgue measure,
Brenier's theorem implies that the quadratic-cost optimal transport from
$q$ to $p_t$ is induced by a map
$T_q^{p_t}:\R^{2n}\to\R^{2n}$ that is unique $q$-almost everywhere and of the form
$T_q^{p_t}=\nabla u$, where $u:\R^{2n}\to\R$ is convex \cite[Theorem~2.26]{Ambrosio2013}. By assumption \eqref{eq:BoundedDensities}, the densities $q$ and $p_t$ are bounded above and bounded away from zero on $\Omega_1 \times \Omega_2$. Therefore, by the regularity theorem for quadratic optimal transport \cite[Theorem~2.27]{Ambrosio2013}, the optimal transport map $T_q^{p_t}$ admits a H\"older-continuous, and hence continuous, representative on $\Omega_1\times\Omega_2$. We use this continuous representative in the following. Furthermore, since $u$ is convex and $\nabla u=T_q^{p_t}$ almost everywhere on $\Omega_1\times\Omega_2$, the continuity of this representative implies that $u$ is continuously differentiable and $\nabla u = T_{q}^{p_t}$ everywhere on $\Omega_1 \times \Omega_2$.

We now relate $T_q^{p_t}$ to the unregularized drifting field used in
\eqref{eq:WFlowVelocity}. For the quadratic cost
$c(y,\bar y)=\frac12\|y-\bar y\|_2^2$, a source Kantorovich potential
associated with the Brenier potential $u$ is
\[
    \Phi_q(y)
    =
    \frac12\|y\|_2^2-u(y),
\]
up to an additive constant; see \cite[Proposition~1.21]{santambrogio2015optimal} and the discussion following that proposition. Now note that 
the lower bound
\eqref{eq:BoundedDensities} implies that the support of $q$ is the closure $\overline{\Omega_1 \times \Omega_2}$, and likewise for $p_t$. Additionally, the quadratic cost belongs
to $C^1(\overline{\Omega_1 \times \Omega_2} \times \overline{\Omega_1 \times \Omega_2})$. Hence the assumptions of
\cite[Proposition~7.18]{santambrogio2015optimal} are satisfied, which shows that the
Kantorovich potential is unique up to an additive constant.

Now recall that for $\varepsilon = 0$, the Sinkhorn divergence $\mathcal{S}_{\varepsilon}$ used in the definition of the drift field \eqref{eq:WFlowVelocity} becomes $\mathcal{S}_0(q, p_t) = \frac{1}{2}W_2^2(q, p_t)$; see, e.g., \cite{han2026wflow}. By \cite[Proposition~7.17]{santambrogio2015optimal}, the Kantorovich potential is a subgradient of the functional $q\mapsto\frac12W_2^2(q,p_t)$ and when it is unique up to additive constants, it represents its first variation, 
\[
\frac{\delta\mathcal S_0(q,p_t)}{\delta q}
=
\Phi_q
\]
up to an additive constant. Consequently, for $q$-almost every
$y\in\Omega_1\times\Omega_2$,
\[
    V_{p_t,q}^0(y)
    =
    -\nabla\Phi_q(y)
    =
    \nabla u(y)-y
    =
    T_q^{p_t}(y)-y.
\]
Hence the unregularized drifting field is precisely the displacement
field of the quadratic optimal transport from $q$ to $p_t$.

Now we are ready to prove the implication
\[
    q=p_t
    \quad\Longrightarrow\quad
    PV_{p_t,q}^0=0
    \quad q\text{-almost everywhere}\,.
\]
Assume that $q=p_t$. Since the identity map transports $q$ to itself
with zero quadratic cost, $W_2(q,q)=0$. Because $T_q^q$ is an optimal
transport map from $q$ to itself,
\[
    \frac12
    \int_{\R^{2n}}
        \|T_q^q(y)-y\|_2^2
        \,q(\mathrm dy)
    =
    \frac12W_2^2(q,q)
    =
    0.
\]
The integrand is nonnegative, and therefore
$T_q^q(y)=y$ for $q$-almost every $y$. It follows from
$V_{q,q}^0=T_q^q-\operatorname{Id}$ that
$V_{q,q}^0=0$ $q$-almost everywhere, and hence
$PV_{q,q}^0=0$ $q$-almost everywhere.

We now prove the converse. Assume that
$PV_{p_t,q}^0=0$ $q$-almost everywhere. The strategy is now to show that under this assumption, the map $T_q^{p_t}$ must be the identity map. We do this component-wise, starting with the second component. Write
$y=(y_1,y_2)\in\R^n\times\R^n$ and decompose the optimal transport map
as
\[
    T_q^{p_t}(y_1,y_2)
    =
    \bigl(
        T_1(y_1,y_2),
        T_2(y_1,y_2)
    \bigr).
\]
Recalling that $P(v_1,v_2)=(0,v_2)$, the identity
$V_{p_t,q}^0=T_q^{p_t}-\operatorname{Id}$ gives
\[
    PV_{p_t,q}^0(y_1,y_2)
    =
    \bigl(
        0,
        T_2(y_1,y_2)-y_2
    \bigr).
\]
Consequently,
\begin{equation}\label{eq:Proof:IdentityT2}
    T_2(y_1,y_2)=y_2
    \qquad
    q\text{-a.e. }(y_1,y_2) \in \Omega_1 \times \Omega_2.
\end{equation}

We next show that \eqref{eq:Proof:IdentityT2} holds everywhere on
$\Omega_1\times\Omega_2$. By \eqref{eq:BoundedDensities},
$q(y)\geq \underline{C} >0$ for Lebesgue-almost every $y \in \Omega_1\times\Omega_2$. Hence, for every measurable
set $A\subset\Omega_1\times\Omega_2$,
\[
    q(A)
    =
    \int_A q(y)\,\mathrm dy
    \geq
    \underline{C}\,|A|,
\]
where $|A|$ denotes the Lebesgue measure of $A$. Consequently,
$q(A)=0$ implies $|A|=0$. Thus, \eqref{eq:Proof:IdentityT2} implies
\[
    T_2(y_1,y_2)=y_2
    \qquad
    \text{for Lebesgue-almost every }(y_1,y_2)
    \in\Omega_1\times\Omega_2.
\]
Since $T_q^{p_t}$ is continuous on $\Omega_1\times\Omega_2$, the map
$(y_1,y_2)\mapsto T_2(y_1,y_2)-y_2$ is continuous there. A continuous
function that vanishes Lebesgue-almost everywhere on an open set must
vanish everywhere on that set. Therefore
\begin{equation}\label{eq:Proof:T2IsIdentity}
    T_2(y_1,y_2)=y_2
    \qquad
    \text{for every }(y_1,y_2)\in\Omega_1\times\Omega_2.
\end{equation}

Let us now consider the first component of $T_q^{p_t}$ and show that it agrees with the identity $q$-almost everywhere. Fix an arbitrary reference point $y_2^0\in\Omega_2$ and define
$h:\Omega_1\to\R$ by
\[
    h(y_1)
    =
    u(y_1,y_2^0)
    -
    \frac12\|y_2^0\|_2^2.
\]
Fix arbitrary $y_1\in\Omega_1$ and $y_2\in\Omega_2$. Since $\Omega_2$
is convex, the line segment
\[
    y_2(r)=(1-r)y_2^0+ry_2,
    \qquad r\in[0,1],
\]
is contained in $\Omega_2$. Define
$\gamma(r)=u(y_1,y_2(r))$. Since $u$ is continuously differentiable, the chain rule gives
\begin{align*}
    \gamma'(r)
    &=
    \nabla_{y_2}u(y_1,y_2(r))
    \cdot
    (y_2-y_2^0) =
    y_2(r)\cdot(y_2-y_2^0),
\end{align*}
where we used
$\nabla_{y_2}u(y_1,y_2)=y_2$ on
$\Omega_1\times\Omega_2$ which follows from \eqref{eq:Proof:T2IsIdentity} with $T_{q}^{p_t} = \nabla u$. Therefore,
\begin{align*}
    u(y_1,y_2)-u(y_1,y_2^0)
    &= \int_0^1 y_2(r)\cdot(y_2-y_2^0)\,\mathrm dr \\
    &= \int_0^1 \bigl((1-r)y_2^0+ry_2\bigr)\cdot(y_2-y_2^0)\,\mathrm dr \\
    &= y_2^0\cdot(y_2-y_2^0)\int_0^1(1-r)\,\mathrm dr
       + y_2\cdot(y_2-y_2^0)\int_0^1r\,\mathrm dr \\
    &= \frac12 y_2^0\cdot(y_2-y_2^0)
       + \frac12 y_2\cdot(y_2-y_2^0) \\
    &= \frac12 (y_2+y_2^0)\cdot(y_2-y_2^0)
     = \frac12\|y_2\|_2^2-\frac12\|y_2^0\|_2^2.
\end{align*}
Hence
\begin{equation}\label{eq:IdentityUhy}
    u(y_1,y_2)
    =
    h(y_1)
    +
    \frac12\|y_2\|_2^2
\end{equation}
for every $(y_1,y_2)\in\Omega_1\times\Omega_2$.
Since $u$ is convex, the restriction $y_1\mapsto u(y_1,y_2^0)$ is convex, and subtracting the constant $\frac12\|y_2^0\|_2^2$ shows that $h$ is convex as well.
Since $u$ is continuously differentiable, $h$ is also continuously differentiable. Differentiating \eqref{eq:IdentityUhy} therefore gives
\[
    T_q^{p_t}(y_1,y_2)
    =
    \nabla u(y_1,y_2)
    =
    \bigl(
        \nabla h(y_1),
        y_2
    \bigr),
\]
everywhere on $\Omega_1 \times \Omega_2$. 

It remains to determine the first component $\nabla h(y_1)$. Let
$P_1:\R^{2n}\to\R^n$ denote the projection
$P_1(y_1,y_2)=y_1$. By assumption, $q$ and $p_t$ have the same first
marginal, so
\[
    (P_1)_\sharp q
    =
    \mu_t
    =
    (P_1)_\sharp p_t.
\]
Since $(T_q^{p_t})_\sharp q=p_t$, we obtain
\begin{align*}
    \mu_t
    &=
    (P_1)_\sharp p_t= (P_1)_\sharp\bigl((T_q^{p_t})_\sharp q\bigr)=
    (P_1\circ T_q^{p_t})_\sharp q\\
    &=
    (\nabla h\circ P_1)_\sharp q=
    (\nabla h)_\sharp\bigl((P_1)_\sharp q\bigr)=
    (\nabla h)_\sharp\mu_t,
\end{align*}
which shows that $\nabla h$ transports $\mu_t$ to itself. We already established that  $h(y_1)=u(y_1,y_2^0)-\frac12\|y_2^0\|_2^2$ is convex. Thus, the map $\nabla h$ is an optimal transport map for the quadratic cost \cite[Theorem~2.13]{Ambrosio2013}. The identity map also transports $\mu_t$ to itself and has zero quadratic cost. Hence the optimal transport cost from $\mu_t$ to itself is zero. Since $\nabla h$ is optimal, it must also attain zero cost. Because the quadratic cost is nonnegative and vanishes only when source and target points coincide, it follows that
\[
    \nabla h(y_1)=y_1
    \qquad
    \mu_t\text{-almost everywhere}.
\]

Since $(P_1)_\sharp q=\mu_t$, the identity $\nabla h(y_1)=y_1$ $\mu_t$-almost everywhere implies, together with the representation $T_q^{p_t}(y_1,y_2)=(\nabla h(y_1),y_2)$, that
\[
    T_q^{p_t}(y_1,y_2) = (y_1,y_2)
\]
for $q$-almost every $(y_1,y_2)\in\Omega_1\times\Omega_2$. Thus $T_q^{p_t}=\operatorname{Id}$ $q$-almost everywhere. Because  $(T_q^{p_t})_\sharp q=p_t$, it follows that
\[
p_t = (T_q^{p_t})_\sharp q = (\operatorname{Id})_\sharp q= q.
\]
This proves the converse implication and hence \eqref{eq:ProjectedEquilibriumW2}.
\end{proof}

\subsection{Empirical loss and computational procedure}
\label{subsec:EmpiricalLoss}
We train with the trajectory data \eqref{eq:data} by replacing the
    distributions in \eqref{eq:ProjectedLoss} with empirical distributions
    constructed from mini-batches. At each training iteration $j$, we sample
    a set
    \[
        \mathcal{T}
        =
        \{t_1,\ldots,t_{n_t}\}
        \subset
        \{0,\ldots,T-1\}
    \]
    of $n_t$ time indices uniformly. 
For each $t\in\mathcal{T}$, we draw $n_B$ transition pairs \\ $\{(x^b(t),x^b(t + 1))\}_{b=1}^{n_B}$
    from the trajectories in \eqref{eq:data}. These samples define the
    empirical target distribution
    \begin{equation}\label{eq:EmpiricalTarget}
        \widehat p_t
        =
        \frac{1}{n_B}
        \sum_{b=1}^{n_B}
        \delta_{(x^b(t),x^b(t+1))}.
    \end{equation}
To approximate the current joint model distribution
    \eqref{eq:TargetJointatTheta}, we independently sample conditioning
    states $\{\bar x^b(t)\}_{b=1}^{n_B}
        \sim \mu_t$ from the observed states at time $t$ and reference samples
    $\{z^b\}_{b=1}^{n_B}\sim\pi$. We then compute
    \[
        y_{j}^b(t)
        =
        f_{\theta_j}(\bar x^b(t),z^b,t),
        \qquad b=1,\ldots,n_B,
    \]
    and define
    \begin{equation}\label{eq:EmpiricalModel}
        \widehat q_{\theta_j}(t)
        =
        \frac{1}{n_B}
        \sum_{b=1}^{n_B}
        \delta_{y_{j}^b(t)}.
    \end{equation}
We further draw an independent second batch
    $\{\bar x'^{\,b}(t)\}_{b=1}^{n_B}\sim\mu_t$ and
    $\{z'^{\,b}\}_{b=1}^{n_B}\sim\pi$, and form
    \[
        y_{j}'^{\,b}(t)
        =
        f_{\theta_j}(\bar x'^{\,b}(t),z'^{\,b},t),
        \qquad
        \widehat q_{\theta_j}'(t)
        =
        \frac{1}{n_B}
        \sum_{b=1}^{n_B}
        \delta_{y_{j}'^{\,b}(t)}.
    \]
For each $t\in\mathcal{T}$, we compute the empirical joint field $\widehat{V}_{t,j}^{\varepsilon}$ from $\widehat q_{\theta_j}(t), \widehat p_t$ and the independent empirical
model distribution $\widehat q_{\theta_j}'(t)$ using the Sinkhorn barycentric projections of
    \cite{han2026wflow}. We then project this field according to
    \eqref{eq:ProjectedDrift},
    \[
        P\widehat V_{t,j}^{\varepsilon}(y)
        =
        \left(
            0,
            \widehat V_{2,t,j}^{\varepsilon}(y)
        \right).
    \]
The empirical counterpart of
    \eqref{eq:ConditionalDriftLoss} is therefore
    \begin{align}
        \widehat{\mathcal L}(\theta;\theta_j)
        &=
        \frac{1}{n_t n_B}
        \sum_{t\in\mathcal T}
        \sum_{b=1}^{n_B}
        \Bigg\|
            g_\theta(\bar x^b(t),z^b,t)
            -
               \left( g_{\theta_j}(\bar x^b(t),z^b,t)
                +
                h\widehat V_{2,t,j}^{\varepsilon}
                (y_{j}^b(t))\right)
        \Bigg\|_2^2.
        \label{eq:EmpiricalProjectedLoss}
    \end{align}
     A gradient step on
    \eqref{eq:EmpiricalProjectedLoss} then updates $\theta_j$ to
    $\theta_{j+1}$.
After training, given an initial state $\widehat X(0)\sim\mu_0$, a sample
    trajectory is generated autoregressively by
\begin{equation}\label{eq:AutoregressiveSampling}
        z(t)\sim\pi,
        \qquad
        \widehat X(t+1)
        =
        g_{\theta}
        \big(
            \widehat X(t),z(t),t
        \big),
        \qquad
        t=0,\ldots,T-1.
    \end{equation}
    Thus, the block-triangular map $f_\theta$ is used only implicitly during training to
    construct the empirical joint drifting field, while the second component
    $g_\theta$ of $f_{\theta}$ is the one-step transition map used at inference time.

\newcommand{\pmc}[2]{#1{(\pm#2)}}

\begin{table*}[t]
\centering
\caption{Duffing oscillator: BTJD achieves the lowest reported marginal and trajectory-QoI errors for both random and fixed initial conditions.}
\label{tab:duffing}
\setlength{\tabcolsep}{2.5pt}
\renewcommand{\arraystretch}{0.92}
\resizebox{\textwidth}{!}{%
\begin{tabular}{lllll}
\toprule
 & \multicolumn{2}{c}{Duffing (IC: $X(0) \sim \mathcal{N}([0, -10]^{\top}, I_2)$)}
 & \multicolumn{2}{c}{Duffing (IC: $X(0) = [0, -10]^{\top}$)} \\
\cmidrule(lr){2-3} \cmidrule(lr){4-5}
 & dist.\ err.\ ($W_2$) & traj.\ QoI err.\
 & dist.\ err.\ ($W_2$) & traj.\ QoI err.\ \\
\midrule
T.\ stepper \cite{otness2021an}
& $\pmc{3.48\mathrm{e}{-01}}{2.00\mathrm{e}{-01}}$
& $\pmc{3.68\mathrm{e}{-02}}{3.30\mathrm{e}{-04}}$
& -- & -- \\
DICE \cite{blickhan-berman-stuart-etal:2025}
& $\pmc{5.80\mathrm{e}{-01}}{5.00\mathrm{e}{-01}}$
& $\pmc{8.30\mathrm{e}{-02}}{5.80\mathrm{e}{-04}}$
& -- & -- \\
Marginal diff.\ \cite{ho2020denoising}
& $\pmc{8.10\mathrm{e}{-02}}{2.50\mathrm{e}{-02}}$
& $\pmc{1.07\mathrm{e}{-01}}{5.00\mathrm{e}{-04}}$
& -- & -- \\
SDE learning \cite{dridi2021learning}
& $\pmc{8.30\mathrm{e}{-02}}{2.80\mathrm{e}{-02}}$
& $\pmc{5.70\mathrm{e}{-03}}{4.50\mathrm{e}{-04}}$
& $\pmc{1.08\mathrm{e}{-01}}{8.00\mathrm{e}{-02}}$
& $\pmc{6.80\mathrm{e}{-03}}{4.50\mathrm{e}{-04}}$ \\
SDE matching \cite{bartosh2025sdematching}
& $\pmc{5.49\mathrm{e}{-01}}{2.00\mathrm{e}{-01}}$
& $\pmc{4.47\mathrm{e}{-02}}{6.20\mathrm{e}{-04}}$
& $\pmc{3.94\mathrm{e}{-01}}{2.20\mathrm{e}{-01}}$
& $\pmc{9.07\mathrm{e}{-02}}{5.60\mathrm{e}{-04}}$ \\
BTJD (ours)
& $\pmc{\mathbf{4.80\mathrm{e}{-02}}}{1.60\mathrm{e}{-02}}$
& $\pmc{\mathbf{2.75\mathrm{e}{-03}}}{4.21\mathrm{e}{-04}}$
& $\pmc{\mathbf{5.80\mathrm{e}{-02}}}{2.20\mathrm{e}{-02}}$
& $\pmc{\mathbf{2.49\mathrm{e}{-03}}}{4.24\mathrm{e}{-04}}$ \\
\bottomrule
\end{tabular}%
}
\end{table*}

\begin{table*}[t]
\centering\small
\caption{Rayleigh-B\'{e}nard convection: BTJD achieves the lowest reported marginal and rotational-current errors at the unseen Rayleigh parameter.}
\label{tab:rb_convection}
\begin{tabular}{lll}
\toprule
 & dist.\ err.\ ($W_2$) & traj.\ QoI err.\ \\
\midrule
T.\ stepper \cite{otness2021an}
& $\pmc{2.69\mathrm{e}{+00}}{1.60\mathrm{e}{+00}}$
& $\pmc{5.80\mathrm{e}{-03}}{1.46\mathrm{e}{-03}}$ \\
DICE \cite{blickhan-berman-stuart-etal:2025}
& $\pmc{2.00\mathrm{e}{-01}}{1.20\mathrm{e}{-01}}$
& $\pmc{9.30\mathrm{e}{-02}}{1.46\mathrm{e}{-03}}$ \\
Marginal diffusion \cite{ho2020denoising}
& $\pmc{1.11\mathrm{e}{-01}}{6.10\mathrm{e}{-02}}$
& $\pmc{6.10\mathrm{e}{-01}}{1.15\mathrm{e}{-02}}$ \\
SDE learning \cite{dridi2021learning}
& $\pmc{5.40\mathrm{e}{-02}}{2.30\mathrm{e}{-02}}$
& $\pmc{2.20\mathrm{e}{-02}}{1.66\mathrm{e}{-03}}$ \\
SDE matching \cite{bartosh2025sdematching}
& $\pmc{2.27\mathrm{e}{-01}}{1.30\mathrm{e}{-01}}$
& $\pmc{1.90\mathrm{e}{-02}}{1.67\mathrm{e}{-03}}$ \\
BTJD (ours)
& $\pmc{\mathbf{4.60\mathrm{e}{-02}}}{3.10\mathrm{e}{-02}}$
& $\pmc{\mathbf{3.00\mathrm{e}{-04}}}{1.36\mathrm{e}{-03}}$ \\
\bottomrule
\end{tabular}%
\end{table*}

\section{Experiments}
To demonstrate the performance of our algorithm for stochastic transition modeling, we assess our approach on four problems and a range of baselines.

\subsection{Baselines}
\label{sec:baselines}

We compare BTJD with deterministic surrogate models, marginal-matching methods,
learned stochastic differential equations, multi-step conditional generative models,
and one- or few-step distilled generative models. Baseline
implementations and training setups follow \cite{jha2026firstordertrajectorymatchingfast}.

\emph{Deterministic surrogate models.}
We compare against deterministic surrogate models that learn a single successor state from the current state. We consider a learned deterministic time stepper \cite{otness2021an} for the low-dimensional problems and a field-to-field surrogate akin to operator learning \cite{stachenfeld2021learned} for the PDE problems. Because these models return a single successor for a given state, they cannot represent the intrinsic stochastic variability of the dynamics.

\emph{Marginal-matching methods.}
We compare against methods that learn the evolution of the time marginals without identifying the stochastic transition law between consecutive states. These include DICE \cite{blickhan-berman-stuart-etal:2025}, which learns deterministic population
dynamics consistent with the observed marginals, and a diffusion-based marginal matching approach \cite{ho2020denoising}, which trains a time-conditioned
diffusion model to generate samples from the marginal distribution $\mu_t$ at each
physical time $t$, without conditioning on the preceding state. Thus, such models can sample from time-marginal distributions at individual times, but the marginals alone do not determine trajectory-dependent statistics.

\emph{Learned stochastic differential equations}
Stochastic surrogate models based on learned SDEs explicitly represent random state evolution through learned drift and diffusion terms. One approach fits these coefficients from consecutive trajectory observations using an Euler--Maruyama transition model \cite{dridi2021learning}, while SDE Matching \cite{bartosh2025sdematching} learns a generative SDE using a simulation-free matching objective. In both cases, trajectories are obtained by simulating the learned stochastic dynamics.

\emph{Conditional diffusion and flow models.}
Conditional generative models directly target the transition law, but typically require an auxiliary sampling procedure at every physical time step. We consider autoregressive diffusion models (ARDM), following
\cite{KOHL2026108641} and building on denoising diffusion probabilistic models \cite{ho2020denoising}, which generate each successor through reverse diffusion, and conditional flow matching (CFM) \cite{albergo-vanden-eijnden:2022,lipman2023flow}, which generates successors by integrating a learned conditional flow. Their inference cost therefore grows with the number of denoising or flow-integration steps used per transition.

\emph{One- and few-step generative models.}
One- and few-step generative methods reduce the inference cost of conditional generative models after training. We focus on MeanFlow-based distillation
\cite{geng2025mean} and use it to compress a pretrained
conditional flow into an average-velocity model, while ReFlow \cite{liu2023rectified} progressively straightens the generative flow and then compresses it into a one-step sampler. We note that in contrast BTJD directly learns the one-step transition map without first training and compressing a multi-step conditional generator.

\subsection{Duffing oscillator} We first consider a duffing oscillator with stochastic forcing. 
\subsubsection{Duffing oscillator: Setup and training data} The duffing oscillator that we consider is governed by 
    \begin{align}\label{eq:duffing}
        &dX_1(\tau) = X_2(\tau) d\tau, \\
        &dX_2(\tau) = (-2 \xi \omega X_2(\tau) + \omega^2 X_1(\tau) - \omega^2 \gamma X_1(\tau)^3)d\tau + \sigma dW(\tau)\,,
    \end{align}
    where the first equation determines position and the second equation describes the dynamics of the velocity. The variable $\xi$ is a damping parameter, $\gamma$ determines the strength of the cubic term, and $\omega$ controls the stiffness of the linear dynamics. The strength of the Brownian motion is set by $\sigma$. We set these parameters to
    \begin{equation}\label{eq:DuffingHPs}
        \xi = 0.2, \quad \gamma=0.2, \quad \omega = 1, \quad\sigma=0.5\,.
    \end{equation}

The SDE \eqref{eq:duffing} is numerically integrated with the Euler-Maruyama scheme with step size $\Delta \tau = 0.01$ on the time interval $[0,\tau_{\textrm{end}}] = [0,12]$, yielding $T = 1200$ time steps.
The training data \eqref{eq:data} are generated with initial conditions sampled from $\mathcal{N}([0, -10]^{\top}, I_2)$, where $I_2 \in \mathbb{R}^{2 \times 2}$ is the identity matrix. We generate $N = 5000$ training trajectories.
The block-triangular map is parametrized as an MLP. The MLP is a 2-hidden layer MLP with width 512 per layer and SiLU activations. Time is concatenated as a scalar to the input.
The $\varepsilon$ in the Sinkhorn loss is set to $0.01$, $h=0.1$, and $50$ Sinkhorn iterations are performed per step.
The model trains by sampling $512 \times 4=2048$ particles per step, $512$ particles from $4$ independent timesteps, for $100$k gradient steps with AdamW at a learning rate of $1\mathrm{e}{-3}$.

\subsubsection{Duffing oscillator: Test initial conditions}\label{sec:NumExp:Duffing:TestInit}
To generate test data, we consider two different initial conditions.  For the first test data set, we draw initial conditions from $\mathcal{N}([0, -10]^{\top}, I_2)$ as well, which is meant to assess the approach's ability to generalize to unseen initial conditions. For the second test data set, we have a deterministic (fixed) initial condition $X(0) = [0, -10]^{\top}$. Having a deterministic initial condition helps to assess how well the approach generates different paths from the same initial condition. 

We demonstrate the performance of our approach based on two error measures. First, we compute the sliced Wasserstein-2 distance between 5000 generated samples and 5000 test-set samples at each timestep $t$, started from the same initial condition. This yields 1200 sliced Wasserstein-2 distances, one for each time step, which we then average and report, along with the standard deviation of this distance over timesteps. This metric is meant to assess agreement between the true marginal distribution and marginal distribution predicted by our method.
Second, we compute an error measure of a trajectory-dependent quantity in order to assess whether the model captures the trajectory-dependent dynamics of the system. We consider the quantity of interest (QoI) \begin{equation}\label{eq:StratQoI}
        Q_{\phi} = \mathbb{E} \left[\int_0^{\tau_{\textrm{end}}} \phi(\tau,X(\tau)) \circ dX(\tau) \right]
    \end{equation}
    for the smooth test function 
    \begin{equation}\label{eq:DuffingQoI}
        \phi(x) = \begin{bmatrix}
            \frac{1}{\sqrt{2\pi}}\exp(\frac{-x_1^2}{2}) \tanh(x_2) \\
            0
        \end{bmatrix}\,.
    \end{equation}
    The quantity describes a velocity-weighted, smoothed counting of the crossings over the barrier $x_1=0$ of each sample $X^{(i)}$ over the time interval. 
We estimate \eqref{eq:StratQoI} from samples via 
\begin{align}\label{eq:StratMidpoint}\small
        & \hat{q}_{\phi}(X^{(i)}) =  \sum_{k=0}^{K-1} \phi \left(\frac{t_k+t_{k+1}}{2},\frac{X^{(i)}(t_k)+X^{(i)}(t_{k+1})}{2} \right)^\top \left(X^{(i)}(t_{k+1}) -  X^{(i)}(t_k)\right)\,, \\
        & \hat{Q}_{\phi}(\mathcal{D}) = \frac{1}{N}\sum_{i=1}^N \hat{q}_{\phi}(X^{(i)})\, \nonumber.
    \end{align}
Due to the variance of this quantity across trajectories, we evaluate the difference between this quantity in 200,000 generated and test-set trajectories from the same initial condition. We report the mean relative error of \eqref{eq:StratMidpoint} obtained with generated trajectories versus ground-truth trajectories as well as its standard error, i.e. for ground truth realizations $X$ and generated trajectories  $\hat{X}$:
\begin{equation}\label{eq:StratRelErrSE}
    \frac{\left(\textrm{Var}\left[ \hat{q}_{\phi}(X)- \hat{q}_{\phi}(\hat{X}) \right]\right)^{\frac{1}{2}}}{\sqrt{N} \left| \hat{Q}_{\phi}(\mathcal{D}) \right|}\,.
\end{equation}

\subsubsection{Duffing oscillator: Results}\label{sec:NumExp:Duffing:Results} \Cref{fig:Duffing:TruthVsLiftDrift} compares ground-truth trajectories with BTJD rollouts from the same deterministic initial condition. Despite starting from a single state, BTJD generates a diverse ensemble with the qualitative variability of the ground-truth trajectories. This demonstrates that the model captures stochasticity in the transition dynamics rather than relying on variability in the initial condition.

Quantitatively, \Cref{tab:duffing} shows that BTJD achieves the lowest sliced-$W_2$ error among the tested methods for both random and deterministic test initial conditions. Thus, the improved agreement of the time marginals persists both for unseen initial conditions drawn from the training distribution and when all trajectories start from the same state. The phase-space snapshots in \Cref{fig:Duffing:Marginal} further show agreement between the generated and ground-truth marginal distributions throughout the time integration. BTJD also achieves the lowest error in the trajectory-dependent QoI \eqref{eq:StratQoI} for both test initial condition distributions; see \Cref{tab:duffing}. Hence, the improved performance achieved by BTJD is not limited to matching time marginals; BTJD also predicts statistics that depend on the temporal evolution of individual trajectories.

\begin{figure}[tb]
  \centering
 \begin{tabular}{cc}
 \hspace*{-0.5cm}\includegraphics[width=0.48\textwidth]{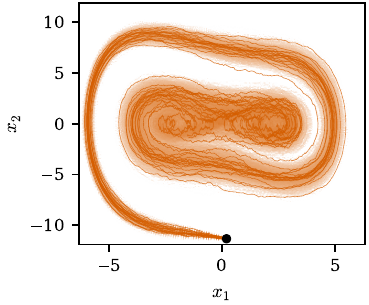} & \includegraphics[width=0.48\textwidth]{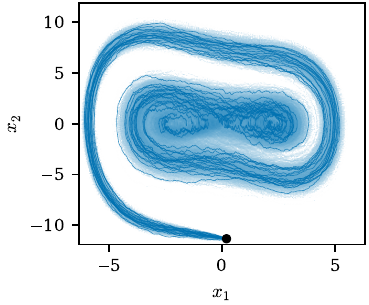}\\
 (a) ground truth & (b) BTJD (ours)
 \end{tabular}
  \caption{Duffing oscillator: From the same deterministic initial state, BTJD generates diverse trajectories that capture well the stochastic variability of the duffing oscillator.}
  \label{fig:Duffing:TruthVsLiftDrift}
\end{figure}
\begin{figure}[tb]
  \centering
  \begin{tabular}{cccc}\hspace*{-0.8cm}
    \includegraphics[width=0.25\textwidth]{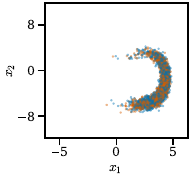} &\hspace*{-0.5cm}
    \includegraphics[width=0.25\textwidth]{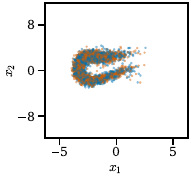} &\hspace*{-0.5cm}
    \includegraphics[width=0.25\textwidth]{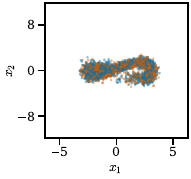} & \hspace*{-0.5cm}
    \includegraphics[width=0.25\textwidth]{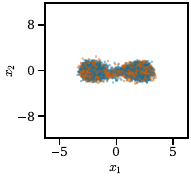} \\
    (a) $\tau = 3$ & (b) $\tau = 6$ & (c) $\tau = 9$ & (d) $\tau = 12$
  \end{tabular}
  \caption{Duffing oscillator: BTJD tracks the evolving phase-space distribution throughout the rollout; ground-truth particles (\textcolor[HTML]{D55E00}{orange}) and BTJD (ours) particles (\textcolor[HTML]{0072B2}{blue}) at four rollout times.}
  \label{fig:Duffing:Marginal}
\end{figure}

\subsection{Rayleigh-Bénard convection} We now consider the 9-dimensional Rayleigh-Bénard convection model  \cite{reiterer-lainscek-schuerrer-etal:1998} with an additive stochastic forcing. 
\subsubsection{Rayleigh-Bénard: Setup} 
The process $$X(\tau) = (X_1(\tau),X_2(\tau),\dots,X_9(\tau)) \in \R^9$$ is governed by
\begin{equation}\label{eq:RBConvectionDef}
        dX(\tau) = f_{\mu}(X(\tau))d\tau+\sigma dW(\tau)
    \end{equation}
    where $f_{\mu}$ is the deterministic vector field of \cite{reiterer-lainscek-schuerrer-etal:1998}, determined by a Rayleigh-type parameter $\mu$. We set  $\sigma = 0.05$.
For training initial conditions, we draw $20000$ samples from $\mathcal{N}(0,0.0004\cdot I_9)$ and generate the corresponding trajectories via Euler-Maruyama with $\Delta \tau = 0.01$ on the interval $[0,20]$ for each control parameter $\mu \in \{ 13.5,13.6,13.7,\dots,14.2 \}$. 

We generate a test data set by drawing samples from the same distribution of initial conditions. 
   For the test data set, the Rayleigh control parameter $\mu$ is set to $\mu = 13.65$ to evaluate the model's ability to generalize to unseen parameters. Analogous to \Cref{sec:NumExp:Duffing:TestInit}, we compute sliced Wasserstein-2 distance to assess the time marginals, reporting the mean distance and its standard error across timesteps.
Additionally, we compute the relative absolute errors of estimating the quantity \eqref{eq:StratQoI} but with test function \begin{equation}\label{eq:RBQoI}
        \phi(x) = Q^\top RQ x, \quad Qx = (x_6,x_7), \quad R = \begin{bmatrix}
            0 & -1 \\
            1 & 0
        \end{bmatrix}\,.
    \end{equation}
    The quantity measures the portion of probability mass motion which is aligned with counterclockwise rotation in the $(x_6,x_7)$ plane. 
    The system has a  probability current with persistent rotation in low-dimensional projections, making this a well-defined quantity to assess trajectory-dependent dynamics. As in \Cref{sec:NumExp:Duffing:TestInit}, we report the mean relative error across trajectories as well as the standard error over trajectories given by \eqref{eq:StratRelErrSE}.

We train a $3$-layer MLP with width of $512$ neurons per layer. Time is embedded as a sinusoidal embedding with $50$ frequencies and then concatenated to the input, along with $\mu$. For this experiment, we train with $\varepsilon = 0.1$, $h=0.1$, $20$ Sinkhorn iterations per step. Each gradient step samples $1024 \times 8 = 8192$ particles, $1024$ particles per timestep and control parameter, with $4$ timesteps sampled for $2$ control parameters in each gradient step. We train for $100$k gradient steps using AdamW at a learning rate of $1\mathrm{e}{-3}$.

\subsubsection{Rayleigh-Bénard: Results} 
We evaluate the BTJD model at the unseen control parameter $\mu=13.65$, which lies between parameter values used during training. We plot in \Cref{fig:RB:Marginal} histograms of the marginals corresponding to dimension $1, 2, 4, 8$, and $9$. Our BTJD is accurately approximating the ground-truth marginal distribution. As reported in \Cref{tab:rb_convection}, BTJD achieves the lowest sliced-$W_2$ error among all methods. The accuracy of the predicted time marginals therefore extends to interpolation in the Rayleigh control parameter. BTJD also achieves the lowest error in the rotational-current QoI \eqref{eq:RBQoI}; see \Cref{tab:rb_convection}. This quantity depends on the direction of probability-mass motion in the $(x_6,x_7)$ plane, so the result shows that BTJD captures trajectory-dependent dynamics that are not determined by the time marginals alone. 

\begin{figure}[tb]
  \centering
  \setlength{\tabcolsep}{0pt}
  \newcommand{\rbpanel}[1]{%
    \raisebox{-0.5\height}{\includegraphics[width=0.165\textwidth]{figures/rb_histograms/#1}}}
  \newcommand{\rblabel}[1]{%
    \raisebox{-0.5\height}{\rotatebox[origin=c]{45}{%
      \shortstack{\small marginal\\[1pt] \small in  $#1$}}}}
  \begin{tabular}{@{}c@{\;}c@{\,}c@{\,}c@{\,}c@{\,}c@{}}
    & $\tau = 0$ & $\tau = 5$ & $\tau = 10$ & $\tau = 15$ & $\tau = 20$ \\[2pt]
    \rblabel{X_1(\tau)} &
    \rbpanel{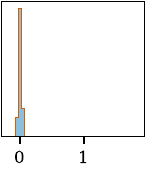} &
    \rbpanel{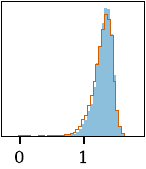} &
    \rbpanel{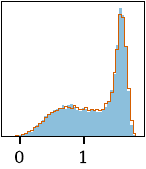} &
    \rbpanel{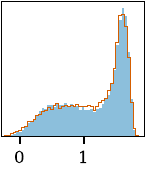} &
    \rbpanel{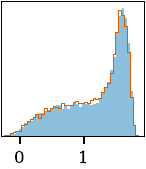} \\
    \rblabel{X_2(\tau)} &
    \rbpanel{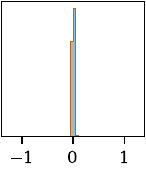} &
    \rbpanel{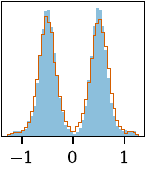} &
    \rbpanel{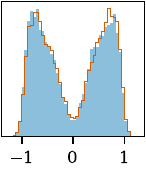} &
    \rbpanel{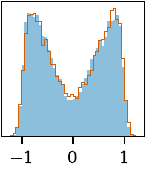} &
    \rbpanel{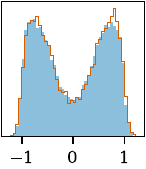} \\
    \rblabel{X_4(\tau)} &
    \rbpanel{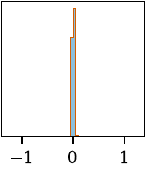} &
    \rbpanel{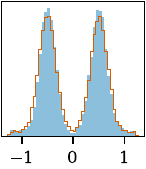} &
    \rbpanel{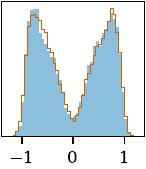} &
    \rbpanel{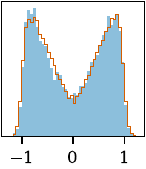} &
    \rbpanel{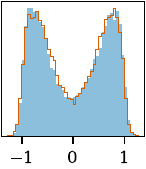} \\
    \rblabel{X_8(\tau)} &
    \rbpanel{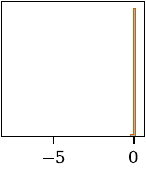} &
    \rbpanel{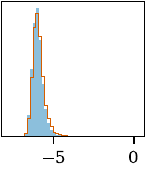} &
    \rbpanel{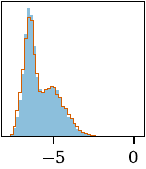} &
    \rbpanel{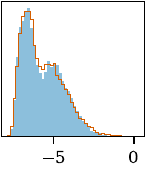} &
    \rbpanel{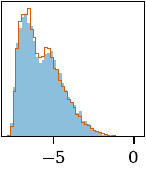} \\
    \rblabel{X_9(\tau)} &
    \rbpanel{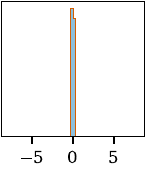} &
    \rbpanel{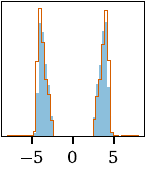} &
    \rbpanel{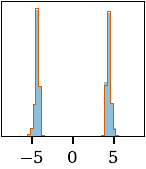} &
    \rbpanel{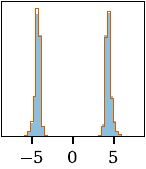} &
    \rbpanel{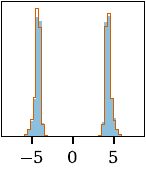}
  \end{tabular}
  \caption{Rayleigh--B\'{e}nard convection: marginal histograms of selected
  state-vector components $X_d(\tau)$ at five rollout times. BTJD (ours,
  \textcolor[HTML]{0072B2}{blue fill}) closely tracks the ground-truth
  distribution (\textcolor[HTML]{D55E00}{orange outline}).}
  \label{fig:RB:Marginal}
\end{figure}

\subsection{Stochastically forced Burgers equation} The preceding experiments establish transition-law and trajectory-level fidelity in systems where these quantities can be diagnosed directly. We now ask whether BTJD retains its advantage for high-dimensional stochastic dynamics stemming from stochastic PDEs, where inference costs become important. We first consider the Burgers equation with stochastic forcing.  

\subsubsection{Burgers: Setup and training data}\label{subsec:BurgSetup}
We first consider the one-dimensional Burgers equation with periodic boundary conditions on the spatial domain $[0,1)$,
    \begin{equation}\label{eq:BurgersDet}
        \partial _\tau u(\tau,x) = \nu \partial_{xx}u(\tau,x)-u(\tau,x)\partial_xu(\tau,x)
    \end{equation}
    with viscosity parameter $\nu =0.007$ controlling the width of shocks. We then discretize \eqref{eq:BurgersDet} in space on a uniformly spaced 64-point grid. This yields an ODE in $\R^{64}$, to which we add stochastic forcing. The forcing considered is discrete-in-space and not a standard Brownian motion, but colored noise. Specifically, for grid location $x_j$,  \begin{equation}\label{eq:ColorNoise}
    dW_j(\tau) = \sigma\sum_{\iota=1}^{10} \frac{1}{\iota}(a_{\iota}(x_j)dB_{\iota}^{(1)}(\tau)+b_{\iota}(x_j)dB_{\iota}^{(2)}(\tau))
    \end{equation}
    where each $B_{\iota}^{(1)}(\tau),B_{\iota}^{(2)}(\tau)$ represent independent Brownian motions and $a_{\iota},b_{\iota}$ represent sine and cosine modes. The noise amplitude is set to $\sigma = 0.04$. 
    
    Training initial conditions are drawn from Gaussian bumps with noise of the same structure as \eqref{eq:ColorNoise},
    \begin{equation}\label{eq:BurgersInit}
    u(0,x) = \exp(-20(x-0.5)^2)+0.015W(0,x)\,.
    \end{equation}
    We solve the forced \eqref{eq:BurgersDet} by evolving from initial conditions \eqref{eq:BurgersInit} using a method-of-lines discretization.  The solutions are computed on a uniformly spaced 64-point grid. The spatial derivatives are approximated by second-order centered finite differences.
The solutions are then evolved in time via Euler-Maruyama with $\Delta \tau =5 \times 10^{-4}$, on the interval $[0,4]$ yielding $8000$ step trajectories. The solution is then downsampled to $800$ steps. We generate $N = 4096$ training trajectories of this form.

We train a generator operating in a latent space. As such, a convolutional autoencoder is trained compressing the $64$-dimensional field into a $16$-dimensional state vector. The generator is then a convolutional neural network (CNN) of 3 FiLM-conditioned residual blocks \cite{perez2018film} of channel width $32$ with circular padding and SiLU activations. Time is injected through a learned embedding, and noise is concatenated channel-wise with the latent state. Additionally, we inject noise into the conditioning with a magnitude of $1\%$ of the standard deviation of the latent space feature magnitude. For this experiment, we train with $\varepsilon=0.1,h=0.1$, and 20 Sinkhorn iterations per step. Each gradient step $1024$ particles per timestep are drawn for four randomly chosen timesteps. We train for $100$k gradient steps using AdamW with learning rate $1\mathrm{e}{-3}$ and cosine schedule.

\subsubsection{Burgers: Test initial conditions and evaluation metrics}\label{sec:BurgersMetrics}
For test initial conditions, we draw an additional $4096$ test trajectories of the form \eqref{eq:BurgersInit}. We evaluate two quantities of interest on the stochastic Burgers trajectories, the energy $E(\tau)$ and enstrophy $Z(\tau)$ at each time point,
\begin{align}\label{eq:BurgEnergyEnstrophy}
        &E(\tau) = \frac{1}{2} \int_0^1 |u(\tau,x)|^2 dx\,,\qquad Z(\tau) = \frac{1}{2} \int_0^1 |\partial_xu(\tau,x)|^2 dx\,.
    \end{align}
    Both integrals are approximated using the trapezoidal rule, with $\partial_xu(\tau,x)$ approximated using centered second-order finite differences on the 64-point grid.
    We assess generated samples via the relative absolute errors of the energy and enstrophy averaged over all time steps, and also report the standard error of the mean relative error across time steps.

\subsubsection{Burgers: Results}
\Cref{fig:stochBurgersSnapshots} compares BTJD samples with ground-truth solutions of the stochastic Burgers equation at several time steps. Despite using a single model evaluation for each physical time step, BTJD accurately predicts the stochastic variability of the solution ensemble together with the sharp spatial structures generated by the nonlinear dynamics.
The quantitative results in \Cref{tab:burgers} further support that BTJD generates accurate trajectories. BTJD achieves the lowest reported mean errors in both energy and enstrophy while requiring only one neural-network function evaluation (NFE) per physical time step, compared with $20$ evaluations for conditional flow matching (CFM) and $50$--$100$ denoising steps for the autoregressive diffusion models (ARDM). Its enstrophy error is less than half that of the next-best tested method.

\begin{figure}[tb]
  \centering \setlength{\tabcolsep}{2pt}
  \begin{tabular}{cccc}
    \hspace*{-0.5cm}\includegraphics[width=0.25\textwidth]{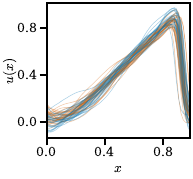} &\hspace*{-0.14cm}
    \includegraphics[width=0.25\textwidth]{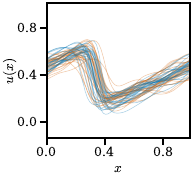} &\hspace*{-0.14cm}
    \includegraphics[width=0.25\textwidth]{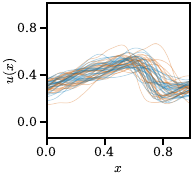} &\hspace*{-0.14cm}
    \includegraphics[width=0.25\textwidth]{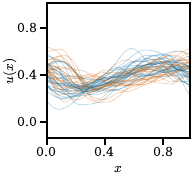} \\
    (a) $\tau = 0.5$ & \hspace*{-0.1cm}(b) $\tau = 1.5$ & \hspace*{-0.1cm}(c) $\tau = 2.5$ & \hspace*{-0.1cm}(d) $\tau = 3.5$
  \end{tabular}
  \caption{Stochastic Burgers: BTJD generates trajectories that capture both stochastic variability and sharp spatial structures over time; ground-truth samples (\textcolor[HTML]{D55E00}{vermillion}) and BTJD (ours) samples (\textcolor[HTML]{0072B2}{blue}).}
  \label{fig:stochBurgersSnapshots}
\end{figure}

\begin{table}
\centering\small
\caption{Stochastic Burgers: With one model evaluation per time step, BTJD achieves the lowest energy and enstrophy errors, with more than a twofold reduction in enstrophy error over the next-best method.}
\label{tab:burgers}
\resizebox{\textwidth}{!}{
\begin{tabular}{lccc}
\toprule
\textbf{Method} & error energy & error enstrophy & NFEs/time step \\
\midrule
Operator learning \cite{stachenfeld2021learned}
& $2.21\mathrm{e}{-2}\;(\pm\,1.54\mathrm{e}{-2})$
& $2.55\mathrm{e}{-1}\;(\pm\,2.34\mathrm{e}{-1})$
& 1 \\

ARDM 50 steps\cite{KOHL2026108641}
& $1.46\mathrm{e}{-2}\;(\pm\,3.39\mathrm{e}{-3})$
& $2.49\mathrm{e}{-1}\;(\pm\,1.43\mathrm{e}{-1})$
& 50 \\

ARDM 75 steps \cite{KOHL2026108641}
& $1.36\mathrm{e}{-2}\;(\pm\,3.49\mathrm{e}{-3})$
& $2.30\mathrm{e}{-1}\;(\pm\,1.26\mathrm{e}{-1})$
& 75 \\

ARDM 100 steps \cite{KOHL2026108641}
& $1.24\mathrm{e}{-2}\;(\pm\,3.12\mathrm{e}{-3})$
& $2.11\mathrm{e}{-1}\;(\pm\,1.14\mathrm{e}{-1})$
& 100 \\
CFM 20 steps \cite{albergo-vanden-eijnden:2022,lipman2023flow} & $2.71\mathrm{e}{-3}\;(\pm\,1.96\mathrm{e}{-3})$
& $1.53\mathrm{e}{-1}\;(\pm\,1.28\mathrm{e}{-1})$
& 20 \\
MeanFlow 1 step\cite{geng2025mean} & $8.22\mathrm{e}{-1}\;(\pm\,2.88\mathrm{e}{-1})$ 
 & $3.59\mathrm{e}{+2}\;(\pm\,2.78\mathrm{e}{+2})$ 
 & 1 \\
MeanFlow 2 steps \cite{geng2025mean} & $1.69\mathrm{e}{-1}\;(\pm\,5.77\mathrm{e}{-2})$ 
& $9.81\mathrm{e}{+1}\;(\pm\,8.05\mathrm{e}{+1})$ 
& 2 \\
ReFlow+Distill 1 step\cite{liu2023rectified} & $2.78\mathrm{e}{-3}\;(\pm\,2.10\mathrm{e}{-3})$ & $1.40\mathrm{e}{-1}\;(\pm\,1.31\mathrm{e}{-1})$ & 1 \\
\textbf{BTJD (ours)}
& $\mathbf{2.22\mathrm{e}{-3}}\;(\pm\,2.47\mathrm{e}{-3})$
& $\mathbf{5.87\mathrm{e}{-2}}\;(\pm\,1.01\mathrm{e}{-1})$
& 1 \\
\bottomrule
\end{tabular}
}
\end{table}

\subsection{Stochastically forced two-dimensional turbulence}
We now consider a stochastically forced two-dimensional incompressible flow, providing a high-dimensional problem with chaotic multiscale dynamics.

\begin{figure}
\centering
\includegraphics[width=1.0\textwidth]{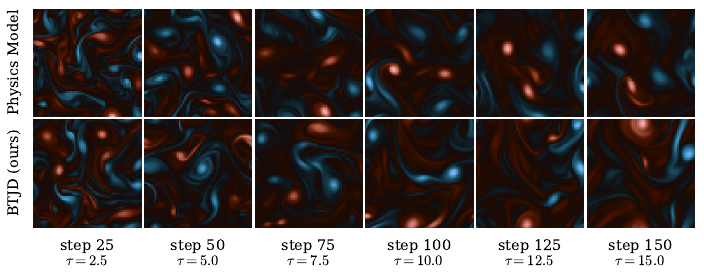}
\caption{Turbulence: Our BTJD generates statistically representative turbulent trajectories with only one neural-network function evaluation per time step.}
\label{fig:NSIndividual}
\end{figure}

\subsubsection{Turbulence: Setup and training data}
We consider the two-dimensional incompressible Navier-Stokes equations on the periodic domain $\Omega=[0,2\pi)^2$, adapting the setup of \cite{Kochkov2021-ML-CFD}. In vorticity form, we have the equation, 
    \begin{equation}\label{eq:NSDef}
        \partial_\tau \omega
        +
        (u\cdot\nabla\omega)
        =
        \left(\nu\Delta\omega-\alpha\omega\right),
        \qquad
        u=\nabla^\perp\Delta^{-1}\omega,
    \end{equation}
    where $\nu=10^{-3}$ is the viscosity and $\alpha=0.1$ is a linear drag coefficient. We discretize on a uniform grid, yielding an ODE, to which we introduce stochastic forcing through low-frequency Fourier modes, at location $x_j$, for $\mathcal{K}_{\textrm{sto}} = \{k \in \mathbb{Z}^2: 0 <|k| \leq 4 \}$:
    \begin{equation}\label{eq:NSForcing}
       dW_j(\tau)
        =
        \sigma
        \sum_{\kappa\in \mathcal{K}_{\textrm{sto}}}
        w_\kappa
        \left(
            a_\kappa(x_j)\,dB_\kappa^{(1)}(\tau)
            +
            b_\kappa(x_j)\,dB_\kappa^{(2)}(\tau)
        \right),
    \end{equation}
    where $a_\kappa= \cos(\kappa \cdot x)$ and $b_\kappa = \sin (\kappa \cdot x)$ are sine and cosine modes, $w_\kappa \propto|\kappa|^{-\frac{1}{2}}$ are the spectral weights, and the Brownian motions are mutually independent. We set $\sigma=0.3$ and add this forcing to each grid point at each step of the integration.

 Initial conditions are sampled from the ensemble used in \cite{Kochkov2021-ML-CFD} and rescaled so that the maximum vorticity is approximately seven as in \cite{Kochkov2021-ML-CFD}.
We solve \eqref{eq:NSDef} on a $256\times256$ grid using a pseudo-spectral method with $2/3$-rule dealiasing. The linear terms are integrated with Crank-Nicolson and the nonlinear term with a fourth-order Runge-Kutta scheme.
We generate $N=2048$ training trajectories using $\Delta \tau=0.001$ and $25{,}000$ fine time steps. Each trajectory is temporally subsampled to $250$ states and spectrally subsampled to a $64\times64$ grid for training.

For generation on high dimensional data, we follow the latent space embedding and Masked Autoencoder (MAE) feature extraction of \cite{deng2026drifting,han2026wflow}. Both the MAE and latent space feature extractor are autoencoders, and the generator is a 2D U-Net with channel widths of $256$ and $512$. Time is  injected via a learned embedding and noise is concatenated channel-wise. Additionally, the conditioning input is perturbed by Gaussian noise of $2\%$ magnitude of the standard deviation of the latent space  magnitude as in \ref{subsec:BurgSetup}. We train with $\varepsilon = 0.07$, which is then scaled by the dimension of the feature as well as the mean inter-particle distance as in \cite{deng2026drifting}. We set $h=1$ and use $20$ Sinkhorn iterations per-step. Each gradient step samples $96$ particles per timestep for $64$ sampled timesteps. We train for $65,000$ gradient steps of AdamW with learning rate $1\mathrm{e}{-4}$ and $5000$-step warmup.

\subsubsection{Turbulence: Test initial conditions and evaluation metrics}
We generate $1024$ additional test trajectories from independently sampled initial conditions following the same construction as the training data.
We assess the predicted dynamics using the kinetic energy and enstrophy,
    \begin{equation}\label{eq:NSEnergyEnstrophy}
        E(\tau)
        =
        \frac{1}{2}\int_\Omega |u(\tau,x)|^2\,dx,
        \qquad
        Z(\tau)
        =
        \frac{1}{2}\int_\Omega |\omega(\tau,x)|^2\,dx.
    \end{equation}
    These quantities characterize the evolution of the kinetic energy and the strength of the vortical structures, respectively. We report their mean relative errors over all timesteps along with the standard error of this quantity over timesteps as in \Cref{sec:BurgersMetrics}.

\subsubsection{Turbulence: Results}
\Cref{fig:NSIndividual,fig:NSEnsemble} show that BTJD generates accurate individual turbulent trajectories as well as diverse ensembles of stochastic  realizations. Thus, the direct one-step sampler remains expressive enough to represent the variability and multiscale structures of the stochastically forced flow dynamics.
The quantitative comparison in \Cref{tab:jaxcfd_results} highlights that BTJD achieves the lowest reported mean errors in both kinetic energy and enstrophy, while requiring only a single neural-network function evaluation (NFE) per time step. Relative to the next-best reported mean errors, BTJD reduces the energy error by approximately a factor of four and the enstrophy error by approximately a factor of three. It therefore outperforms both multi-step diffusion and flow baselines and one-step distilled models.

\begin{figure}
\centering
\includegraphics[width=1.0\textwidth]{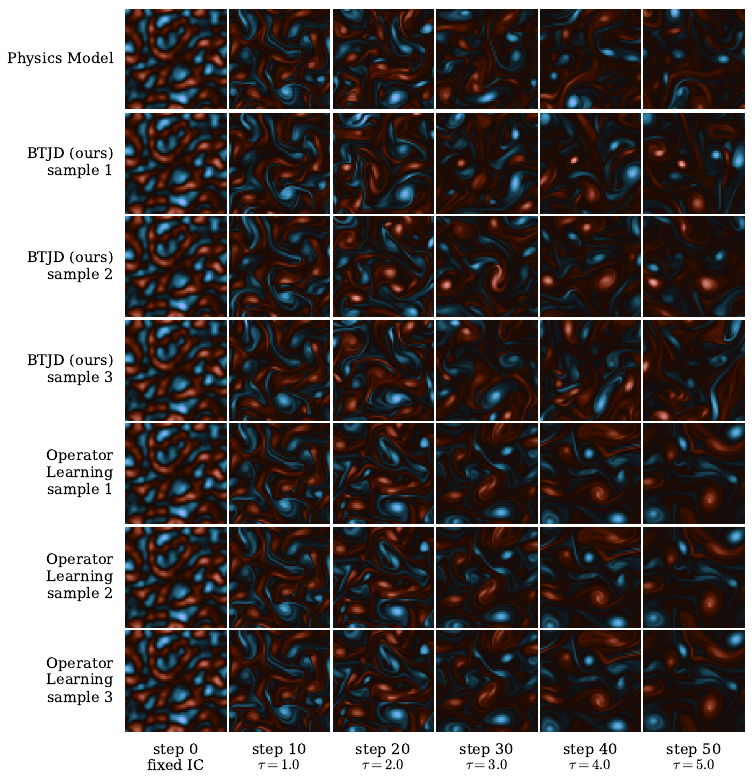}
\caption{Turbulence: BTJD captures the intrinsic stochasticity of the dynamics. Starting from the same initial condition, repeated BTJD rollouts separate and produce distinct turbulent realizations, as seen in the three generated trajectories. In contrast, a deterministic surrogate such as given by operator learning returns the same trajectory from the same initial condition and therefore cannot represent this stochastic variability.}
\label{fig:NSEnsemble}
\end{figure}

\section{Conclusions}
We developed BTJD for learning one-step generative surrogate models of stochastic dynamics from trajectory data. By drifting the joint law of consecutive states while preserving the current-state marginal, the method turns available transition pairs into a direct conditional sampler and enables stochastic rollouts with one model evaluation per time step. The underlying idea extends beyond the particular drifting scheme considered here. More generally, for generative procedures that require the target distribution to enter the learning update through an empirical distribution given by samples, lifting the learning problem to the joint space can make the target distribution accessible from trajectory data, provided that the model is endowed with sufficient structure to expose the desired conditional after training. The role of the block-triangular parametrization in BTJD is precisely to provide this structure.

\begin{table}
\centering\small
\caption{Turbulence: With one neural-network function evaluation (NFE) per time step, BTJD reduces energy error by about a factor four and enstrophy error by almost a factor of three over the next-best baseline.}
\label{tab:jaxcfd_results}
\centering
\resizebox{\textwidth}{!}{
\begin{tabular}{lccc}
\toprule
\textbf{Method} & error energy & error enstrophy & NFEs/time step \\
\midrule
Operator learning \cite{stachenfeld2021learned}
& $\pmc{2.13\mathrm{e}{-1}}{1.58\mathrm{e}{-1}}$
& $\pmc{1.92\mathrm{e}{-1}}{1.48\mathrm{e}{-1}}$
& 1 \\

ARDM 50 steps \cite{KOHL2026108641}
& $\pmc{3.14\mathrm{e}{-1}}{1.48\mathrm{e}{-1}}$
& $\pmc{2.82\mathrm{e}{-1}}{9.60\mathrm{e}{-2}}$
& $50$ \\

ARDM 75 steps \cite{KOHL2026108641}
& $\pmc{2.75\mathrm{e}{-1}}{1.40\mathrm{e}{-1}}$
& $\pmc{1.71\mathrm{e}{-1}}{9.40\mathrm{e}{-2}}$
& $75$ \\
ARDM 100 steps \cite{KOHL2026108641}
& $\pmc{1.16\mathrm{e}{-1}}{1.09\mathrm{e}{-1}}$
& $\pmc{1.34\mathrm{e}{-1}}{8.46\mathrm{e}{-2}}$
& $100$ \\
CFM 20 steps \cite{albergo-vanden-eijnden:2022,lipman2023flow}
& $\pmc{1.39\mathrm{e}{-1}}{6.30\mathrm{e}{-2}}$
& $\pmc{1.04\mathrm{e}{-1}}{7.70\mathrm{e}{-2}}$
& 20 \\
MeanFlow 1 step \cite{geng2025mean}
& $\pmc{2.46\mathrm{e}{-1}}{1.25\mathrm{e}{-1}}$
& $\pmc{4.37\mathrm{e}{-1}}{2.78\mathrm{e}{-1}}$
& $1$ \\
MeanFlow 2 steps \cite{geng2025mean}
& $\pmc{1.40\mathrm{e}{-1}}{6.61\mathrm{e}{-2}}$
& $\pmc{9.81\mathrm{e}{-2}}{2.97\mathrm{e}{-2}}$
& $2$ \\
MeanFlow 4 steps \cite{geng2025mean}
& $\pmc{1.22\mathrm{e}{-1}}{8.45\mathrm{e}{-2}}$
& $\pmc{6.74\mathrm{e}{-2}}{3.66\mathrm{e}{-2}}$
& $4$ \\
ReFlow+Distill 1 step \cite{liu2023rectified}
& $\pmc{8.60\mathrm{e}{-2}}{4.41\mathrm{e}{-2}}$
& $\pmc{7.16\mathrm{e}{-2}}{6.36\mathrm{e}{-2}}$
& $1$ \\
\textbf{BTJD (ours)}
& $\pmc{\mathbf{2.14\mathrm{e}{-2}}}{1.93\mathrm{e}{-2}}$
& $\pmc{\mathbf{2.52\mathrm{e}{-2}}}{1.91\mathrm{e}{-2}}$
& 1 \\
\bottomrule
\end{tabular}
}
\end{table}

\bibliography{main} 

\begin{thebibliography}{10}

\bibitem{albergo2025stochastic}
M.~S. Albergo, N.~M. Boffi, and E.~Vanden-Eijnden.
\newblock Stochastic interpolants: A unifying framework for flows and diffusions.
\newblock {\em Journal of Machine Learning Research}, 26(209):1--80, 2025.

\bibitem{albergo-vanden-eijnden:2022}
M.~S. Albergo and E.~Vanden-Eijnden.
\newblock Building normalizing flows with stochastic interpolants.
\newblock In {\em The Eleventh International Conference on Learning Representations}, 2023.

\bibitem{Ambrosio2013}
L.~Ambrosio and N.~Gigli.
\newblock A user's guide to optimal transport.
\newblock In {\em Modelling and Optimisation of Flows on Networks: Cetraro, Italy 2009}, volume 2062 of {\em Lecture Notes in Mathematics}, pages 1--155. Springer, 2013.

\bibitem{baptista2024conditional}
R.~Baptista, B.~Hosseini, N.~B. Kovachki, and Y.~M. Marzouk.
\newblock Conditional sampling with monotone {GAN}s: From generative models to likelihood-free inference.
\newblock {\em SIAM/ASA Journal on Uncertainty Quantification}, 12(3):868--900, 2024.

\bibitem{Baptista2024}
R.~Baptista, Y.~M. Marzouk, and O.~Zahm.
\newblock On the representation and learning of monotone triangular transport maps.
\newblock {\em Foundations of Computational Mathematics}, 24(6):2063--2108, 2024.

\bibitem{bartosh2025sdematching}
G.~Bartosh, D.~Vetrov, and C.~A. Naesseth.
\newblock {SDE} matching: Scalable and simulation-free training of latent stochastic differential equations.
\newblock In {\em Proceedings of the 42nd International Conference on Machine Learning}, volume 267 of {\em Proceedings of Machine Learning Research}, pages 3054--3070. PMLR, 2025.

\bibitem{BATLLE2024112549}
P.~Batlle, M.~Darcy, B.~Hosseini, and H.~Owhadi.
\newblock Kernel methods are competitive for operator learning.
\newblock {\em Journal of Computational Physics}, 496:112549, 2024.

\bibitem{berman2024parametric}
J.~Berman, T.~Blickhan, and B.~Peherstorfer.
\newblock Parametric model reduction of mean-field and stochastic systems via higher-order action matching.
\newblock In {\em Advances in Neural Information Processing Systems}, volume~37, pages 56588--56618, 2024.

\bibitem{berman2026ngif}
J.~Berman, T.~Blickhan, and B.~Peherstorfer.
\newblock Leveraging gauge freedom for learning non-gradient population dynamics of stochastic systems.
\newblock In {\em Proceedings of the 43rd International Conference on Machine Learning}, volume 306 of {\em Proceedings of Machine Learning Research}. PMLR, 2026.

\bibitem{berman2026stochastic}
J.~Berman, T.~Blickhan, and B.~Peherstorfer.
\newblock Stochastic lifting for generating trajectories of stochastic physical systems.
\newblock In {\em Proceedings of the 43rd International Conference on Machine Learning}, volume 306 of {\em Proceedings of Machine Learning Research}. PMLR, 2026.

\bibitem{blickhan-berman-stuart-etal:2025}
T.~Blickhan, J.~Berman, A.~M. Stuart, and B.~Peherstorfer.
\newblock {DICE}: Discrete inverse continuity equation for learning population dynamics, 2025.

\bibitem{boffi2025how}
N.~M. Boffi, M.~S. Albergo, and E.~Vanden-Eijnden.
\newblock How to build a consistency model: Learning flow maps via self-distillation.
\newblock In {\em Advances in Neural Information Processing Systems}, volume~38, 2025.

\bibitem{cachay2023dyffusion}
S.~R. Cachay, B.~Zhao, H.~Joren, and R.~Yu.
\newblock {DYffusion}: A dynamics-informed diffusion model for spatiotemporal forecasting.
\newblock In {\em Advances in Neural Information Processing Systems}, volume~36, 2023.

\bibitem{chen2024probforecast}
Y.~Chen, M.~Goldstein, M.~Hua, M.~S. Albergo, N.~M. Boffi, and E.~Vanden-Eijnden.
\newblock Probabilistic forecasting with stochastic interpolants and {F{\"o}llmer} processes.
\newblock In {\em Proceedings of the 41st International Conference on Machine Learning}, volume 235 of {\em Proceedings of Machine Learning Research}, pages 6728--6756. PMLR, 2024.

\bibitem{conti2026veni}
P.~Conti, J.~Kneifl, A.~Manzoni, A.~Frangi, J.~Fehr, S.~L. Brunton, and J.~N. Kutz.
\newblock {VENI, VINDy, VICI}: A generative reduced-order modeling framework with uncertainty quantification.
\newblock {\em Neural Networks}, 198:108543, 2026.

\bibitem{deng2026drifting}
M.~Deng, H.~Li, T.~Li, Y.~Du, and K.~He.
\newblock Generative modeling via drifting, 2026.

\bibitem{dridi2021learning}
N.~Dridi, L.~Drumetz, and R.~Fablet.
\newblock Learning stochastic dynamical systems with neural networks mimicking the {Euler--Maruyama} scheme.
\newblock In {\em 2021 29th European Signal Processing Conference (EUSIPCO)}, pages 1990--1994. IEEE, 2021.

\bibitem{pmlr-v89-feydy19a}
J.~Feydy, T.~S{\'e}journ{\'e}, F.-X. Vialard, S.-i. Amari, A.~Trouv{\'e}, and G.~Peyr{\'e}.
\newblock Interpolating between optimal transport and {MMD} using sinkhorn divergences.
\newblock In {\em Proceedings of the Twenty-Second International Conference on Artificial Intelligence and Statistics}, volume~89 of {\em Proceedings of Machine Learning Research}, pages 2681--2690. PMLR, 2019.

\bibitem{freitag2025learning}
M.~A. Freitag, J.~M. Nicolaus, and M.~Redmann.
\newblock Learning stochastic reduced models from data: A nonintrusive approach.
\newblock {\em SIAM Journal on Scientific Computing}, 47(5):A2851--A2880, 2025.

\bibitem{Fresca2021}
S.~Fresca, L.~Dede', and A.~Manzoni.
\newblock A comprehensive deep learning-based approach to reduced order modeling of nonlinear time-dependent parametrized pdes.
\newblock {\em Journal of Scientific Computing}, 87(2):61, Apr 2021.

\bibitem{geng2025mean}
Z.~Geng, M.~Deng, X.~Bai, J.~Z. Kolter, and K.~He.
\newblock Mean flows for one-step generative modeling.
\newblock In {\em Advances in Neural Information Processing Systems}, volume~38, 2025.

\bibitem{gloeckler2024allinone}
M.~Gloeckler, M.~Deistler, C.~D. Weilbach, F.~Wood, and J.~H. Macke.
\newblock All-in-one simulation-based inference.
\newblock In {\em Proceedings of the 41st International Conference on Machine Learning}, volume 235 of {\em Proceedings of Machine Learning Research}, pages 15735--15766. PMLR, 2024.

\bibitem{han2026wflow}
J.~Han, P.~Li, Q.~Guo, R.~Xu, S.~Ermon, and E.~J. Cand{\`e}s.
\newblock One-step generative modeling via wasserstein gradient flows, 2026.

\bibitem{ho2020denoising}
J.~Ho, A.~N. Jain, and P.~Abbeel.
\newblock Denoising diffusion probabilistic models.
\newblock In {\em Advances in Neural Information Processing Systems}, volume~33, 2020.

\bibitem{jha2026firstordertrajectorymatchingfast}
S.~Jha, T.~Schorlepp, N.~Geissler, J.~Berman, and B.~Peherstorfer.
\newblock First-order trajectory matching: Fast ensemble predictions of chaotic, turbulent, stochastic systems.
\newblock {\em arXiv}, 2606.11138, 2026.

\bibitem{kerrigan2024dynamic}
G.~Kerrigan, G.~Migliorini, and P.~Smyth.
\newblock Dynamic conditional optimal transport through simulation-free flows.
\newblock In {\em Advances in Neural Information Processing Systems}, volume~37, pages 93602--93642, 2024.

\bibitem{Kochkov2021-ML-CFD}
D.~Kochkov, J.~A. Smith, A.~Alieva, Q.~Wang, M.~P. Brenner, and S.~Hoyer.
\newblock Machine learning--accelerated computational fluid dynamics.
\newblock {\em Proceedings of the National Academy of Sciences}, 118(21):e2101784118, 2021.

\bibitem{KOHL2026108641}
G.~Kohl, L.-W. Chen, and N.~Thuerey.
\newblock Benchmarking autoregressive conditional diffusion models for turbulent flow simulation.
\newblock {\em Neural Networks}, 199:108641, 2026.

\bibitem{JMLR:v24:21-1524}
N.~Kovachki, Z.~Li, B.~Liu, K.~Azizzadenesheli, K.~Bhattacharya, A.~Stuart, and A.~Anandkumar.
\newblock Neural operator: Learning maps between function spaces with applications to pdes.
\newblock {\em Journal of Machine Learning Research}, 24(89):1--97, 2023.

\bibitem{lai2026unifiedviewdriftingscorebased}
C.-H. Lai, B.~Nguyen, N.~Murata, Y.~Takida, T.~Uesaka, Y.~Mitsufuji, S.~Ermon, and M.~Tao.
\newblock A unified view of drifting and score-based models, 2026.

\bibitem{lipman2023flow}
Y.~Lipman, R.~T.~Q. Chen, H.~Ben-Hamu, M.~Nickel, and M.~Le.
\newblock Flow matching for generative modeling.
\newblock In {\em The Eleventh International Conference on Learning Representations}, 2023.

\bibitem{liu2023rectified}
X.~Liu, C.~Gong, and Q.~Liu.
\newblock Flow straight and fast: Learning to generate and transfer data with rectified flow.
\newblock In {\em The Eleventh International Conference on Learning Representations}, 2023.

\bibitem{Lu2021}
L.~Lu, P.~Jin, G.~Pang, Z.~Zhang, and G.~E. Karniadakis.
\newblock Learning nonlinear operators via deeponet based on the universal approximation theorem of operators.
\newblock {\em Nature Machine Intelligence}, 3(3):218--229, Mar 2021.

\bibitem{Marzouk2017}
Y.~Marzouk, T.~Moselhy, M.~Parno, and A.~Spantini.
\newblock Sampling via measure transport: An introduction.
\newblock In R.~Ghanem, D.~Higdon, and H.~Owhadi, editors, {\em Handbook of Uncertainty Quantification}, pages 785--825. Springer, 2017.

\bibitem{neklyudov2023action}
K.~Neklyudov, R.~Brekelmans, D.~Severo, and A.~Makhzani.
\newblock Action matching: Learning stochastic dynamics from samples.
\newblock In {\em Proceedings of the 40th International Conference on Machine Learning}, volume 202 of {\em Proceedings of Machine Learning Research}, pages 25858--25889. PMLR, 2023.

\bibitem{otness2021an}
K.~Otness, A.~Gjoka, J.~Bruna, D.~Panozzo, B.~Peherstorfer, T.~Schneider, and D.~Zorin.
\newblock An extensible benchmark suite for learning to simulate physical systems.
\newblock In {\em Advances in Neural Information Processing Systems, Datasets and Benchmarks Track}, 2021.

\bibitem{PEHERSTORFER2016196}
B.~Peherstorfer and K.~Willcox.
\newblock Data-driven operator inference for nonintrusive projection-based model reduction.
\newblock {\em Computer Methods in Applied Mechanics and Engineering}, 306:196--215, 2016.

\bibitem{perez2018film}
E.~Perez, F.~Strub, H.~de~Vries, V.~Dumoulin, and A.~Courville.
\newblock {FiLM}: Visual reasoning with a general conditioning layer.
\newblock In {\em Proceedings of the Thirty-Second AAAI Conference on Artificial Intelligence (AAAI-18)}, volume~32, pages 3942--3951, 2018.

\bibitem{Regazzoni2024}
F.~Regazzoni, S.~Pagani, M.~Salvador, L.~Dede', and A.~Quarteroni.
\newblock Learning the intrinsic dynamics of spatio-temporal processes through latent dynamics networks.
\newblock {\em Nature Communications}, 15(1):1834, Feb 2024.

\bibitem{reiterer-lainscek-schuerrer-etal:1998}
P.~Reiterer, C.~Lainscsek, F.~Sch{\"u}rrer, C.~Letellier, and J.~Maquet.
\newblock A nine-dimensional lorenz system to study high-dimensional chaos.
\newblock {\em Journal of Physics A: Mathematical and General}, 31(34):7121--7139, 1998.

\bibitem{salimans2022progressive}
T.~Salimans and J.~Ho.
\newblock Progressive distillation for fast sampling of diffusion models.
\newblock In {\em The Tenth International Conference on Learning Representations}, 2022.

\bibitem{santambrogio2015optimal}
F.~Santambrogio.
\newblock {\em Optimal Transport for Applied Mathematicians: Calculus of Variations, {PDE}s, and Modeling}, volume~87 of {\em Progress in Nonlinear Differential Equations and Their Applications}.
\newblock Birkh{\"a}user, 2015.

\bibitem{schwerdtner2026twoparameter}
P.~Schwerdtner, T.~Blickhan, and B.~Peherstorfer.
\newblock Two-parameter flows for learning population dynamics of physical systems.
\newblock In {\em Proceedings of the 43rd International Conference on Machine Learning}, volume 306 of {\em Proceedings of Machine Learning Research}. PMLR, 2026.

\bibitem{pmlr-v180-shi22a}
Y.~Shi, V.~De~Bortoli, G.~Deligiannidis, and A.~Doucet.
\newblock Conditional simulation using diffusion {Schr{\"o}dinger} bridges.
\newblock In {\em Proceedings of the Thirty-Eighth Conference on Uncertainty in Artificial Intelligence}, volume 180 of {\em Proceedings of Machine Learning Research}, pages 1792--1802. PMLR, 2022.

\bibitem{shysheya2024conditional}
A.~Shysheya, C.~Diaconu, F.~Bergamin, P.~Perdikaris, J.~M. Hern{\'a}ndez-Lobato, R.~E. Turner, and E.~Mathieu.
\newblock On conditional diffusion models for {PDE} simulations.
\newblock In {\em Advances in Neural Information Processing Systems}, volume~37, pages 23246--23300, 2024.

\bibitem{sohldickstein2015deep}
J.~Sohl-Dickstein, E.~Weiss, N.~Maheswaranathan, and S.~Ganguli.
\newblock Deep unsupervised learning using nonequilibrium thermodynamics.
\newblock In {\em Proceedings of the 32nd International Conference on Machine Learning}, volume~37 of {\em Proceedings of Machine Learning Research}, pages 2256--2265. PMLR, 2015.

\bibitem{song2023consistency}
Y.~Song, P.~Dhariwal, M.~Chen, and I.~Sutskever.
\newblock Consistency models.
\newblock In {\em Proceedings of the 40th International Conference on Machine Learning}, volume 202 of {\em Proceedings of Machine Learning Research}, pages 32211--32252. PMLR, 2023.

\bibitem{song2021score}
Y.~Song, J.~Sohl-Dickstein, D.~P. Kingma, A.~Kumar, S.~Ermon, and B.~Poole.
\newblock Score-based generative modeling through stochastic differential equations.
\newblock In {\em The Ninth International Conference on Learning Representations}, 2021.

\bibitem{doi:10.1137/20M1312204}
A.~Spantini, R.~Baptista, and Y.~M. Marzouk.
\newblock Coupling techniques for nonlinear ensemble filtering.
\newblock {\em SIAM Review}, 64(4):921--953, 2022.

\bibitem{JMLR:v19:17-747}
A.~Spantini, D.~Bigoni, and Y.~M. Marzouk.
\newblock Inference via low-dimensional couplings.
\newblock {\em Journal of Machine Learning Research}, 19(66):1--71, 2018.

\bibitem{stachenfeld2021learned}
K.~Stachenfeld, D.~B. Fielding, D.~Kochkov, M.~Cranmer, T.~Pfaff, J.~Godwin, C.~Cui, S.~Ho, P.~Battaglia, and {\'A}.~S{\'a}nchez-Gonz{\'a}lez.
\newblock Learned coarse models for efficient turbulence simulation.
\newblock In {\em The Tenth International Conference on Learning Representations}, 2022.

\bibitem{tong2024conditional}
A.~Tong, K.~Fatras, N.~Malkin, G.~Huguet, Y.~Zhang, J.~Rector-Brooks, G.~Wolf, and Y.~Bengio.
\newblock Improving and generalizing flow-based generative models with minibatch optimal transport.
\newblock {\em Transactions on Machine Learning Research}, pages 1--34, 2024.

\bibitem{turan2026generativedriftingsecretlyscore}
E.~Turan and M.~Ovsjanikov.
\newblock Generative drifting is secretly score matching: A spectral and variational perspective, 2026.

\bibitem{wyrod2026generative}
P.~Wyrod, A.~Chattopadhyay, and D.~Venturi.
\newblock Generative forecasting with joint probability models.
\newblock {\em Journal of Computational Physics}, 563:115109, 2026.

\bibitem{zhai2026conditional}
P.~S. Zhai, S.~Jeong, and V.~Ro{\v{c}}kov{\'a}.
\newblock Conditional flow matching for bayesian posterior inference.
\newblock In {\em Proceedings of the 29th International Conference on Artificial Intelligence and Statistics}, volume 300 of {\em Proceedings of Machine Learning Research}, pages 2044--2052. PMLR, 2026.

\bibitem{zhou2025imm}
L.~Zhou, S.~Ermon, and J.~Song.
\newblock Inductive moment matching.
\newblock In {\em Proceedings of the 42nd International Conference on Machine Learning}, volume 267 of {\em Proceedings of Machine Learning Research}, pages 78651--78686. PMLR, 2025.

\end{thebibliography}

\end{document}